\documentclass[11pt]{article}

\usepackage[margin=1in]{geometry}

\usepackage{enumitem}
\setlist{nosep,leftmargin=*}

\usepackage{amsmath, amssymb, amsthm}
\usepackage{mathtools}
\usepackage{bm}
\usepackage{bbm}

\usepackage{microtype}
\usepackage{lmodern}
\usepackage[T1]{fontenc}
\usepackage[utf8]{inputenc}

\usepackage{graphicx}
\usepackage{booktabs}
\usepackage{multirow}
\usepackage{array}
\usepackage{caption}
\usepackage{subcaption}

\usepackage{algorithm}
\usepackage{algorithmic}
 
\usepackage{hyperref}
 
\usepackage{natbib}
\setcitestyle{authoryear, open={(}, close={)}}
 
\usepackage{xcolor}
\usepackage{enumitem}
\setlist{noitemsep, topsep=3pt, parsep=0pt, partopsep=0pt}
\usepackage{cleveref}
 
\newtheorem{theorem}{Theorem}[section]
\newtheorem{proposition}[theorem]{Proposition}
\newtheorem{lemma}[theorem]{Lemma}

\newtheorem{assumption}[theorem]{Assumption}
\theoremstyle{remark}
\newtheorem{remark}[theorem]{Remark}
 
\title{\textbf{Decoupled Conformal Optimisation: Efficient Prediction Sets via Independent Tuning and Calibration}}

\author{%
  \begin{tabular}{cccc}
    Fanyi Wu$^{1,2}$ & Lihua Niu$^{1}$ & Samuel Kaski$^{1,3,4}$ & Michele Caprio$^{1}$ \\[4pt]
  \end{tabular}\\[6pt]
  \small $^{1}$Department of Computer Science, University of Manchester, Manchester, UK\\
  \small $^{2}$UKRI AI Centre for Doctoral Training in Decision Making for Complex Systems\\
  \small $^{3}$Department of Computer Science, Aalto University, Espoo, Finland\\
  \small $^{4}$ELLIS Institute, Finland\\
  \small Correspondence: \texttt{fanyi.wu@manchester.ac.uk}
}
\date{}

\begin{document}

\maketitle
\begin{abstract}    
Bayesian conformal optimisation methods often use the same held-out data both to search for efficient prediction sets and to certify coverage or risk. This coupling is natural for high-probability risk-control guarantees, but it is not necessary when the target is standard finite-sample marginal conformal coverage. We propose Decoupled Conformal Optimisation (DCO), a train-tune-calibrate design principle that uses an independent tuning split for efficiency-oriented structural selection and a fresh calibration split for the final conformal quantile. Conditional on the tuned structure, standard split-conformal exchangeability yields finite-sample marginal coverage for any candidate class, without a confidence parameter or multiple-testing correction. DCO therefore targets a different finite-sample guarantee from PAC-style methods: marginal conformal coverage rather than high-probability risk control. Under consistency assumptions on the coupled risk bound, the two approaches nevertheless converge to the same population threshold. Across classification and regression benchmarks, including ImageNet-A, CIFAR-100, Diabetes, California Housing, and Concrete, DCO tracks the nominal coverage level closely while often reducing average prediction-set size or interval width relative to PAC-style calibration. On ImageNet-A, for example, the average set size decreases from $26.52$ to $25.26$ and the 95th-percentile set size from $58.95$ to $53.73$; on Diabetes, the average interval width decreases from $2.098$ to $1.914$.
\end{abstract}
\section{Introduction}
\label{Sec: Introduction}

Reliable uncertainty quantification is a central goal in modern machine learning.
Conformal prediction (CP) offers a distribution-free way to construct prediction
sets with finite-sample marginal coverage under exchangeability
\citep{vovk2005algorithmic, shafer2008tutorial, angelopoulos2021gentle,
barber2023conformal, caprio2025joyscategoricalconformalprediction}. In split CP,
the data are divided into separate roles. A model is fitted on
$D_{\mathrm{train}}$. A non-conformity score $S(x,y)$ is then evaluated on an
independent calibration set $D_{\mathrm{cal}}$. The prediction set is
\begin{equation}
    C(x) = \{y : S(x,y) \leq \hat{q}_{1-\alpha}\},
    \label{eq:split_cp_set}
\end{equation}
where $\hat{q}_{1-\alpha}$ is the empirical $(1-\alpha)$ conformal quantile of
the calibration scores. The final threshold in \eqref{eq:split_cp_set} is an
order statistic. Once the score and model structure have been fixed, this
order statistic is computed on calibration data that are independent of the
test point. This is the structural condition behind the standard
exchangeability proof and the resulting finite-sample marginal coverage
guarantee; see Appendix~\ref{app:splitcp}.

Modern conformal methods increasingly seek more than validity. They also aim for
efficiency: prediction sets should be small, informative, and still valid. In conformal optimisation, efficiency can be improved through the score,
the prior, the surrogate model, or the threshold-search rule
\citep{caprio2024credal, caprio2025conformalized}. These choices introduce a simple but important design question: \textbf{which data should be used for optimisation, and which data should be reserved for calibration?}

The question matters because the two roles are statistically different.
Optimisation searches for an efficient prediction rule. Calibration certifies the
final rule by computing a conformal quantile. If the same held-out split is used
for both roles, the final threshold is no longer computed on data untouched by
the preceding search. The usual split-conformal exchangeability argument
therefore does not apply directly. The problem is not optimisation itself. It is
the reuse of the calibration data for both search and certification.

This distinction is particularly relevant for coupled risk-control procedures,
including Conformal Risk Control (CRC) and PAC-style calibration based on Bayesian quadrature (BQ)
\citep{angelopoulos2025conformalriskcontrol,snell2025conformal}. Such methods select a threshold that satisfies a risk constraint with confidence $1-\delta$. The coverage guarantees derived from PAC-style mthods are different from the marginal coverage guarantee of split CP. PAC-sryle methods like CRC and BQ calibration target high-probability risk control, whereas split CP targets marginal coverage at level $1-\alpha$. When marginal conformal coverage is the desired guarantee, it is natural to ask whether optimisation and calibration need to be coupled on the same data.

We answer this question with \emph{Decoupled Conformal Optimisation} (DCO).
DCO-Warmstart is a train-tune-calibrate design principle for Bayesian conformal
optimisation. It assigns each data split a distinct role. The training split
fits the Bayesian model. The tuning split $D_{\mathrm{tune}}$ selects
efficiency-oriented structure, such as the score, prior, model configuration, or
threshold-search rule. The calibration split $D_{\mathrm{cal}}$ is used only
after this selection step. Its sole purpose is to compute the final conformal
quantile.

This separation restores the split-conformal logic. Conditional on the structure selected using $D_{\mathrm{train}}$ and $D_{\mathrm{tune}}$, the calibration
scores and the test score remain exchangeable. The final threshold is then an order statistic of an untouched calibration set. Hence the standard
split-conformal proof applies without modification, yielding finite-sample
marginal coverage; see Appendix~\ref{app:proofs}. At the same time, DCO-Warmstart still allows explicit optimisation before calibration.

The idea is close in spirit to selecting a model on a validation set before
applying split CP. The validity argument is the classical one. The contribution
is to make the data-separation principle explicit for Bayesian conformal
optimisation pipelines, where efficiency search and coverage assessment are
often intertwined. It also
clarifies the guarantee being targeted: marginal conformal coverage rather than
high-probability risk control.

Our contributions are as follows:
\begin{itemize}

\item \textbf{A decoupled train-tune calibrate principle.}
We formulate DCO-Warmstart as a simple design principle for Bayesian conformal
optimisation. Structural choices are selected on $D_{\mathrm{tune}}$, while the
final conformal quantile is computed on an untouched $D_{\mathrm{cal}}$.

\item \textbf{A finite-sample marginal coverage guarantee.}
We show that DCO-Warmstart inherits the standard split-conformal coverage guarantee once
the tuned structure is fixed independently of $D_{\mathrm{cal}}$. No confidence
parameter $\delta$ is needed. No multiple-testing correction over the candidate
class is required for the final conformal calibration.

\item \textbf{A comparison with CRC/BQ-style calibration.}
We clarify the difference between the finite-sample guarantees of DCO-Warmstart and
CRC/BQ-style methods. DCO-Warmstart targets marginal conformal coverage. CRC/BQ-style
methods target high-probability risk control. We also show that, under uniform
consistency conditions on the risk estimator, both approaches converge to the
same population threshold,
\begin{equation}
    \lambda^\star = \inf\{\lambda : R(\lambda) \leq \alpha\}.
    \label{eq:population_threshold_intro}
\end{equation}

\item \textbf{Empirical evidence across regression and classification.}
We evaluate DCO-Warmstart on ImageNet-A, CIFAR-100, Diabetes, California Housing, and
Concrete, with additional ablations over candidate search, split allocation,
and target coverage level. Across these settings, DCO-Warmstart tracks nominal coverage
closely and often reduces average set size or interval width relative to
BQ/CRC-style calibration.

\end{itemize}

\section{Related Work}
\label{Sec: Related Work}

We organise related work around a central distinction: whether optimisation and 
calibration are performed jointly on the same held-out data, or separated across 
independent splits. This perspective organises classical CP, score-design 
methods, BCP-CRC, LTT, ROCP, and DCO-Warmstart along a common dimension: how optimisation and calibration data are allocated. It also clarifies the statistical role played by each data split.

\paragraph{Quantile-based calibration.}
Classical CP \citep{vovk2005algorithmic, shafer2008tutorial} provides 
distribution-free finite-sample coverage by setting the threshold to an empirical 
quantile of non-conformity scores, without explicit threshold optimisation. 
Efficiency is therefore largely governed by the non-conformity score 
\citep{Bellotti, dhillon2024expectedsizeconformalprediction, Sadinle_2018}. 
A substantial literature improves efficiency by reshaping the score rather than the final conformal threshold. Examples include adaptive classification scores such as RAPS \citep{angelopoulos2021uncertainty}, regression-adapted residuals 
\citep{lei2018distribution}, and posterior predictive densities via AOI 
importance reweighting in Bayesian settings \citep{fong2021conformal}. Since 
score design and conformal quantile calibration play distinct statistical roles, 
DCO-Warmstart is compatible with scores from this line of work.

\paragraph{Risk-constrained threshold optimisation.}
A second line of work treats the threshold $\lambda$ as a decision variable. 
BCP-CRC \citep{wu2026bayesian}, for example, selects $\lambda$ by minimising 
expected prediction set size subject to a high-probability miscoverage constraint 
enforced through the $L^+$ bound of CRC 
\citep{angelopoulos2025conformalriskcontrol, snell2025conformal}:
\begin{equation}
    \begin{aligned}
    \min_{\lambda}\quad 
    & \mathbb{E}_X\big[|C(X;\lambda)|\big] \\
    \text{s.t.}\quad 
    & \mathbb{P}\!\left(
    \mathbb{P}\big(Y \notin C(X;\lambda)\big) \leq \alpha
    \right) \geq 1-\delta .
    \end{aligned}
    \label{eq:crc_risk_control}
\end{equation}
Here the inner probability is the population miscoverage risk for a fixed 
threshold, while the outer probability is taken over the draw of the calibration 
sample. Thus, \eqref{eq:crc_risk_control} gives a high-probability risk-control 
statement over the calibration sample, rather than the standard marginal 
coverage guarantee of split CP 
\citep{vovk2005algorithmic, shafer2008tutorial}. Such a guarantee is useful when the objective is risk certification, especially with limited calibration data or frequent recalibration. The trade-off is that threshold selection and risk 
certification are performed on the same held-out data, making BCP-CRC the closest 
coupled baseline to DCO-Warmstart.

DCO-Warmstart is not a replacement for CRC/BQ-style methods when the scientific objective 
is high-probability risk control. The procedures target different statistical 
guarantees. CRC/BQ-style methods are appropriate when one wants a risk certificate that holds with confidence $1-\delta$ over the calibration sample. DCO-Warmstart is appropriate when the target guarantee is the standard finite-sample marginal 
coverage guarantee of split CP. Our claim is therefore not 
that coupled calibration is unnecessary in general, but that it is unnecessary 
for marginal conformal coverage when optimisation can be performed on an 
independent tuning split.

Learn-then-Test (LTT) \citep{angelopoulos2025ltt} also operates in a 
population-risk regime, but calibrates feasibility through hypothesis testing 
rather than constrained optimisation. Like CRC/BQ-style methods, it targets 
high-probability risk control rather than marginal split-conformal coverage.

\paragraph{Decision-theoretic set optimisation.}
Risk-Optimal Conformal Prediction (ROCP) 
\citep{wang2026optimaldecisionmakingbasedprediction} optimises the full 
prediction-set construction and downstream action rules for decision quality, and then applies CP on independent data to restore marginal coverage. Like DCO-Warmstart, ROCP separates the optimisation stage from the final conformal calibration step. The 
difference is scope: ROCP intervenes at the level of the entire set-construction 
and action rule, whereas DCO-Warmstart intervenes at the level of Bayesian structural and 
threshold-search configuration.

\paragraph{Where DCO-Warmstart fits.}
The methods above differ along two dimensions: what is optimised and when 
certification occurs. Classical CP certifies directly by an empirical quantile 
without threshold optimisation. Score-design methods optimise the score before 
calibration and then certify by quantile. BCP-CRC and related CRC/BQ-style 
methods optimise and certify risk on the same held-out data. LTT certifies 
population-risk feasibility through testing. ROCP optimises set structure and 
then calibrates on independent data.

DCO-Warmstart occupies a complementary position. Structural choices are selected on an 
independent tuning split, while the final conformal quantile is computed on a 
separate calibration split. This recovers the standard finite-sample marginal 
coverage guarantee without an additional confidence parameter. At the same time, 
under consistency assumptions on the coupled risk bound, DCO-Warmstart remains 
asymptotically aligned with CRC/BQ-style methods at the level of the population 
threshold. This asymptotic alignment should not be interpreted as an equivalence 
of finite-sample guarantees: DCO-Warmstart targets marginal coverage, whereas CRC/BQ-style 
methods target high-probability risk control. The practical consequence is a 
difference in finite-sample guarantee type, which we examine empirically in 
Section~\ref{sec:experiment}.

\begin{table}[t]
\centering
\caption{Comparison of optimisation and calibration roles across methods. 
For methods with a confidence parameter, $\delta$ denotes the failure probability; 
the guarantee holds with confidence $1-\delta$ over the calibration sample.}
\label{tab:method_comparison}
\setlength{\tabcolsep}{4pt}
\renewcommand{\arraystretch}{1.05}
\resizebox{\textwidth}{!}{%
\begin{tabular}{lcccc}
\toprule
Method 
& Optimisation data 
& Calibration data 
& Guarantee 
& Confidence parameter \\
\midrule
Split CP 
& none / fixed score 
& $D_{\mathrm{cal}}$ 
& Marginal coverage 
& No \\
Score-tuned CP 
& $D_{\mathrm{tune}}$ 
& $D_{\mathrm{cal}}$ 
& Marginal coverage 
& No \\
CRC/BQ-style 
& $D_{\mathrm{cal}}$ 
& $D_{\mathrm{cal}}$ 
& High-probability risk control 
& Yes, $\delta$ \\
LTT 
& $D_{\mathrm{val}}$ (via testing) 
& same as optimisation 
& High-probability risk control 
& Yes, $\delta$ \\
ROCP 
& $D_{\mathrm{opt}}$ 
& $D_{\mathrm{cal}}$ 
& Marginal coverage 
& No \\
DCO-Warmstart 
& $D_{\mathrm{tune}}$ 
& $D_{\mathrm{cal}}$ 
& Marginal coverage 
& No \\
DirectTune 
& $D_{\mathrm{tune}}$ 
& none 
& None in general 
& No \\
\bottomrule
\end{tabular}%
}
\end{table}
\section{Theoretical Background}
\label{Sec: Theory}

We formalise the prediction problem and establish the 
theoretical properties of DCO-Warmstart. All proofs are deferred 
to Appendix~\ref{app:proofs}.

\subsection{Problem Setup}

Let $(X,Y)\in\mathcal{X}\times\mathcal{Y}$ be drawn 
i.i.d.\ from an unknown distribution $P$. Prediction 
sets are parameterised by structural choices 
$\phi\in\Phi$, such as the score function type, prior 
hyperparameters, or model architecture, together with 
a scalar threshold $\lambda\in\Lambda$:
\begin{equation}
    C_{\phi,\lambda}(x)=\{y:S_\phi(x,y)\le\lambda\},
\end{equation}
where $S_\phi$ is a non-conformity score derived from 
the posterior predictive distribution 
$p(y\mid x,D_{\text{train}})$. Larger $\lambda$ 
produces larger prediction sets; we assume 
$\lambda_1\le\lambda_2$ implies 
$C_{\phi,\lambda_1}(x)\subseteq C_{\phi,\lambda_2}(x)$. 
The two quantities of interest are the population 
miscoverage risk and expected set size,
\begin{equation}
    R(\phi,\lambda)=\mathbb{P}(Y\notin C_{\phi,\lambda}(X)),
    \qquad
    \mathcal{S}(\phi,\lambda)=
    \mathbb{E}[|C_{\phi,\lambda}(X)|],
\end{equation}
and the data are partitioned into three independent 
splits: $D_{\text{train}}$ for model fitting, 
$D_{\text{tune}}$ for structural optimisation, and 
$D_{\text{cal}}$ for conformal calibration.

\subsection{Finite-sample marginal
coverage of DCO-Warmstart}

DCO-Warmstart selects structural choices on $D_{\text{tune}}$ 
by solving the empirical problem
\begin{equation}
    (\hat{\phi}_{\text{tune}},\hat{\lambda}_{\text{tune}})
    \in\operatorname*{arg\,min}_{(\phi,\lambda)\in\Phi\times\Lambda}
    \widehat{\mathcal{S}}_{\text{tune}}(\phi,\lambda)
    \quad\text{s.t.}\quad
    \widehat{R}_{\text{tune}}(\phi,\lambda)\le\alpha,
    \label{eq:dco-tuning}
\end{equation}
where $\widehat{R}_{\text{tune}}$ and 
$\widehat{\mathcal{S}}_{\text{tune}}$ are empirical 
estimates on $D_{\text{tune}}$. Only 
$\hat{\phi}_{\text{tune}}$ is carried forward. The
tuning threshold $\hat{\lambda}_{\text{tune}}$ is used
only to rank candidate structures on
$D_{\text{tune}}$ and is discarded before deployment.
The deployed threshold is the split-conformal
quantile computed on the independent calibration
split $D_{\text{cal}}$. Since $\hat{\phi}_{\text{tune}}$ 
does not depend on $D_{\text{cal}}$, the standard 
exchangeability argument of split CP applies directly.

\begin{theorem}[Finite-sample marginal coverage of DCO-Warmstart]
\label{thm:dco-coverage}
Assume that
\[
    (X_1,Y_1),\ldots,(X_m,Y_m),(X_{m+1},Y_{m+1})
\]
are exchangeable conditional on $D_{\mathrm{train}}$ and
$D_{\mathrm{tune}}$, and that $\hat{\phi}_{\textup{tune}}$
is measurable with respect to
$D_{\mathrm{train}}\cup D_{\mathrm{tune}}$ only. Let
\[
    S_i=S_{\hat{\phi}_{\textup{tune}}}(X_i,Y_i),
    \qquad i=1,\ldots,m,
\]
let $S_{(1)}\le\cdots\le S_{(m)}$ denote the sorted
calibration scores, and define
\[
    k_\alpha=\left\lceil (m+1)(1-\alpha)\right\rceil,
    \qquad
    \hat{q}_{1-\alpha}=
    \begin{cases}
        S_{(k_\alpha)}, & k_\alpha\le m,\\
        +\infty, & k_\alpha=m+1.
    \end{cases}
\]
Then
\[
    \mathbb{P}\!\left\{
    Y_{m+1}\in
    C_{\hat{\phi}_{\textup{tune}},\hat{q}_{1-\alpha}}
    (X_{m+1})
    \right\}
    \ge 1-\alpha.
\]
This guarantee holds for any candidate class $\Phi$,
finite or infinite, because calibration is applied only
after a single tuned structure has been fixed
independently of $D_{\mathrm{cal}}$.
\end{theorem}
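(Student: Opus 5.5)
The plan is to condition on $\mathcal{F}=\sigma(D_{\mathrm{train}},D_{\mathrm{tune}})$ and then run the classical split-conformal rank argument with the structure held fixed. Since $\hat{\phi}_{\textup{tune}}$ is $\mathcal{F}$-measurable, conditional on $\mathcal{F}$ the map $(x,y)\mapsto S_{\hat{\phi}_{\textup{tune}}}(x,y)$ is a fixed deterministic function. Applying a fixed function coordinatewise preserves exchangeability. Hence, conditional on $\mathcal{F}$, the scores $S_1,\dots,S_m,S_{m+1}$, with $S_{m+1}=S_{\hat{\phi}_{\textup{tune}}}(X_{m+1},Y_{m+1})$, are exchangeable. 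This is the only place the measurability hypothesis is used. It also explains why the result holds for an arbitrary class $\Phi$: no union over $\phi$ is ever taken, only one fixed score function is calibrated.

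Next I rewrite the coverage event as a rank event. By the definition $C_{\phi,\lambda}(x)=\{y:S_\phi(x,y)\le\lambda\}$, we have $Y_{m+1}\in C_{\hat{\phi}_{\textup{tune}},\hat q_{1-\alpha}}(X_{m+1})$ if and only if $S_{m+1}\le \hat q_{1-\alpha}$. If $k_\alpha=m+1$, then $\hat q_{1-\alpha}=+\infty$ and coverage holds trivially. Otherwise I claim
\[
S_{m+1}\le S_{(k_\alpha)} \iff \#\{i\le m+1: S_i< S_{m+1}\}<k_\alpha ,
\]
and in any case the event $\{S_{m+1}\le S_{(k_\alpha)}\}$ contains the event that $S_{m+1}$ is among the $k_\alpha$ smallest of all $m+1$ scores, with ties broken favourably. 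The inclusion I will actually use is the following. If $S_{m+1}>S_{(k_\alpha)}$, then at least $k_\alpha$ calibration scores are strictly smaller than $S_{m+1}$. So the strict rank of $S_{m+1}$ among the $m+1$ scores exceeds $k_\alpha$.

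The remaining step is the exchangeability count. Define $E_j=\{\#\{i\le m+1:S_i<S_j\}\ge k_\alpha\}$. By conditional exchangeability, $\mathbb{P}(E_j\mid\mathcal{F})$ is the same for every $j$. At most $m+1-k_\alpha$ indices $j$ can have at least $k_\alpha$ scores strictly below them: the $k_\alpha$ smallest values, counted with multiplicity, cannot. Summing over $j$ therefore gives
\[
(m+1)\,\mathbb{P}(E_{m+1}\mid\mathcal{F})\le m+1-k_\alpha .
\]
Hence $\mathbb{P}(S_{m+1}\le\hat q_{1-\alpha}\mid\mathcal{F})\ge k_\alpha/(m+1)\ge 1-\alpha$, using $k_\alpha\ge(m+1)(1-\alpha)$. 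Taking expectation over $\mathcal{F}$ (the tower property) yields the unconditional bound.

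The main obstacle is handling ties correctly, so that no continuity assumption on the scores is needed. This is why I use strict-inequality counts together with the deterministic bound "at most $m+1-k_\alpha$ indices have $\ge k_\alpha$ strictly smaller values". That bound holds pointwise for any realisation, ties included. The remaining subtlety is purely measure-theoretic: making rigorous that conditional exchangeability passes through the $\mathcal{F}$-measurable score map. I would handle this by working with a regular conditional distribution given $\mathcal{F}$, or equivalently by noting that for every permutation $\pi$ and bounded measurable $g$, $\mathbb{E}[g(S_{\pi(1)},\dots,S_{\pi(m+1)})h]=\mathbb{E}[g(S_1,\dots,S_{m+1})h]$ for all bounded $\mathcal{F}$-measurable $h$. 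This identity follows from the assumed exchangeability of the data given $\mathcal{F}$.
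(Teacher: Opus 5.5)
Your proof is correct and follows the same route as the paper's. You condition on $D_{\mathrm{train}}$ and $D_{\mathrm{tune}}$, and hence on $\hat{\phi}_{\textup{tune}}$, use the conditional exchangeability of $S_1,\dots,S_{m+1}$, apply the split-conformal rank argument, and remove the conditioning by the tower property. The only difference is that you carry out the tie-robust rank count explicitly (at most $m+1-k_\alpha$ indices can have $k_\alpha$ strictly smaller scores), whereas the paper just cites the classical split-conformal coverage argument for that step.
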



\subsection{Sample Complexity}

We do not claim an end-to-end finite-sample oracle guarantee for the final recalibrated DCO-Warmstart predictor. Instead, we give a finite-class uniform-convergence result for the tuning stage alone, which explains when the tuning split is large enough to select an efficient candidate structure before the independent conformal calibration step is applied.

\begin{proposition}[Tuning oracle inequality for a finite search class]
\label{prop:tuning_oracle}
Let $\mathcal{A}$ be a finite class of candidate prediction-set rules, where each $a\in\mathcal{A}$ defines a set-valued predictor $C_a$. Let
\[
    R(a)=\mathbb{P}\{Y\notin C_a(X)\},
    \qquad
    S(a)=\mathbb{E}[s(C_a(X))]
\]
denote its miscoverage risk and expected size, where $s(C_a(X))\in[0,B]$. Let $\widehat{R}_{\mathrm{tune}}(a)$ and $\widehat{S}_{\mathrm{tune}}(a)$ be the corresponding empirical quantities on $m_{\mathrm{tune}}$ independent tuning samples. Fix $\varepsilon_R,\varepsilon_S,\eta>0$. With probability at least $1-\eta$, uniformly over $a\in\mathcal{A}$,
\[
    \left|\widehat{R}_{\mathrm{tune}}(a)-R(a)\right|
    \leq \varepsilon_R,
    \qquad
    \left|\widehat{S}_{\mathrm{tune}}(a)-S(a)\right|
    \leq \varepsilon_S,
\]
provided
\[
    m_{\mathrm{tune}}
    \geq
    \max\left\{
    \frac{\log(4|\mathcal{A}|/\eta)}{2\varepsilon_R^2},
    \frac{B^2\log(4|\mathcal{A}|/\eta)}{2\varepsilon_S^2}
    \right\}.
\]
Consequently, if the tuning rule selects
\[
    \widehat{a}
    \in
    \arg\min_{a\in\mathcal{A}}
    \widehat{S}_{\mathrm{tune}}(a)
    \quad
    \text{s.t.}
    \quad
    \widehat{R}_{\mathrm{tune}}(a)\leq \alpha-\varepsilon_R,
\]
then, on the same event
\[
    R(\widehat{a})\leq \alpha
\]
and
\[
    S(\widehat{a})
    \leq
    \inf_{a\in\mathcal{A}:R(a)\leq \alpha-2\varepsilon_R}
    S(a)
    +
    2\varepsilon_S.
\]
\end{proposition}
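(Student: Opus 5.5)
The plan is a standard finite-class uniform convergence argument via Hoeffding's inequality and a union bound, followed by a deterministic argument on the good event.

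\textbf{Uniform concentration.} Fix $a\in\mathcal{A}$. The quantity $\widehat{R}_{\mathrm{tune}}(a)$ is the mean of $m_{\mathrm{tune}}$ i.i.d.\ indicators $\mathbbm{1}\{Y_j\notin C_a(X_j)\}\in[0,1]$ with mean $R(a)$. The quantity $\widehat{S}_{\mathrm{tune}}(a)$ is the mean of i.i.d.\ variables $s(C_a(X_j))\in[0,B]$ with mean $S(a)$. Here $C_a$ is fixed independently of $D_{\mathrm{tune}}$, since it is built from $D_{\mathrm{train}}$ only, so we condition on $D_{\mathrm{train}}$. Two-sided Hoeffding then gives
\[
\mathbb{P}\bigl(|\widehat{R}_{\mathrm{tune}}(a)-R(a)|>\varepsilon_R\bigr)\le 2e^{-2m_{\mathrm{tune}}\varepsilon_R^2}
\]
and
\[
\mathbb{P}\bigl(|\widehat{S}_{\mathrm{tune}}(a)-S(a)|>\varepsilon_S\bigr)\le 2e^{-2m_{\mathrm{tune}}\varepsilon_S^2/B^2}.
\]
A union bound over the $2|\mathcal{A}|$ events shows the total failure probability is at most
\[
2|\mathcal{A}|\bigl(e^{-2m\varepsilon_R^2}+e^{-2m\varepsilon_S^2/B^2}\bigr),
\]
writing $m=m_{\mathrm{tune}}$. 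Under the stated lower bound on $m_{\mathrm{tune}}$, each exponential is at most $\eta/(4|\mathcal{A}|)$, so the total is at most $\eta$. This gives the uniform event $E$ with probability at least $1-\eta$.

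\textbf{Feasibility on $E$.} The selected $\widehat a$ satisfies $\widehat{R}_{\mathrm{tune}}(\widehat a)\le\alpha-\varepsilon_R$. On $E$ this gives $R(\widehat a)\le \widehat{R}_{\mathrm{tune}}(\widehat a)+\varepsilon_R\le\alpha$.

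\textbf{Near-optimality on $E$.} Let $a$ be any rule with $R(a)\le\alpha-2\varepsilon_R$. On $E$ we have $\widehat{R}_{\mathrm{tune}}(a)\le\alpha-\varepsilon_R$, so $a$ is empirically feasible and hence $\widehat{S}_{\mathrm{tune}}(\widehat a)\le \widehat{S}_{\mathrm{tune}}(a)$. Therefore
\[
S(\widehat a)\le \widehat{S}_{\mathrm{tune}}(\widehat a)+\varepsilon_S\le \widehat{S}_{\mathrm{tune}}(a)+\varepsilon_S\le S(a)+2\varepsilon_S.
\]
Taking the infimum over such $a$ gives the claim. If no such $a$ exists, the infimum is $+\infty$ and the bound is vacuous.

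\textbf{Main obstacle.} The one point needing care is ensuring $\widehat a$ exists, i.e.\ the empirical constraint set is nonempty. On $E$ it is nonempty whenever the comparator class is nonempty, and otherwise the statement is trivial under the convention $\inf\emptyset=+\infty$. One should also note that the argument is conditional on $D_{\mathrm{train}}$, so Hoeffding applies to the fixed finite class $\mathcal{A}$. Everything else is routine.
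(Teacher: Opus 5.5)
Your proposal is correct and follows the paper's proof essentially step for step: Hoeffding plus a union bound over $\mathcal{A}$ for each of the risk and size deviations (each contributing failure probability at most $\eta/2$), followed by the deterministic feasibility and oracle-comparison argument on the good event. The only differences are cosmetic. The paper first derives the untightened version (constraint $\widehat{R}_{\mathrm{tune}}(a)\le\alpha$) and then notes that the tightened constraint $\widehat{R}_{\mathrm{tune}}(a)\le\alpha-\varepsilon_R$ gives the stated result, whereas you go straight to the tightened constraint and also make explicit the conditioning on $D_{\mathrm{train}}$ and the nonemptiness of the empirical feasible set.
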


\paragraph{Interpretation.}
The final deployed DCO-Warmstart set is still calibrated on $D_{\mathrm{cal}}$, so its finite-sample marginal coverage does not rely on the empirical feasibility event in the proposition. The proposition instead explains when the tuning split is large enough to select an efficient structure before the independent conformal calibration step is applied.
Calibration accuracy can be analysed separately once the tuned structure is fixed. Appendix~\ref{app:proofs} gives a Dvoretzky--Kiefer--Wolfowitz-based lemma showing that, under local regularity around the population quantile, the empirical conformal quantile concentrates around its population target at rate $m_{\mathrm{cal}}^{-1/2}$.

\subsection{Asymptotic Agreement with CRC/BQ}

\begin{proposition}[Asymptotic agreement under uniform risk-bound consistency]
\label{prop:asymptotic}
Fix a structure $\phi$ and write
\[
    R(\lambda)=\mathbb{P}\{Y\notin C_{\phi,\lambda}(X)\}.
\]
Assume:
\begin{enumerate}
    \item $R(\lambda)$ is continuous and strictly decreasing in a neighbourhood of
    \[
        \lambda^\star = \inf\{\lambda:R(\lambda)\leq\alpha\}.
    \]
    \item The split-conformal DCO-Warmstart threshold 
    satisfies
    \[
        \widehat{\lambda}_{\mathrm{DCO}}\xrightarrow{p}\lambda^\star.
    \]
    \item The coupled CRC/BQ threshold can be written as
    \[
        \widehat{\lambda}_{\mathrm{CRC}}
        =
        \inf\{\lambda:
        \widehat{R}_m(\lambda)+b_m(\lambda,\delta_m)\leq\alpha\},
    \]
    where
    \[
        \sup_{\lambda\in\Lambda}
        |\widehat{R}_m(\lambda)-R(\lambda)|
        \xrightarrow{p}0,
        \qquad
        \sup_{\lambda\in\Lambda}
        b_m(\lambda,\delta_m)
        \xrightarrow{p}0.
    \]
\end{enumerate}
Then
\[
    \widehat{\lambda}_{\mathrm{CRC}}
    -
    \widehat{\lambda}_{\mathrm{DCO}}
    \xrightarrow{p}0.
\]
\end{proposition}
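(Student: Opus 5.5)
The argument reduces to showing $\widehat{\lambda}_{\mathrm{CRC}}\xrightarrow{p}\lambda^\star$. Since assumption~2 already gives $\widehat{\lambda}_{\mathrm{DCO}}\xrightarrow{p}\lambda^\star$, the triangle inequality
\[
|\widehat{\lambda}_{\mathrm{CRC}}-\widehat{\lambda}_{\mathrm{DCO}}|\le|\widehat{\lambda}_{\mathrm{CRC}}-\lambda^\star|+|\widehat{\lambda}_{\mathrm{DCO}}-\lambda^\star|
\]
then yields the conclusion. For the CRC part we use an argmin/inverse-consistency argument for the generalized inverse $\lambda\mapsto\inf\{\lambda:F(\lambda)\le\alpha\}$, applied to the perturbed function $F_m=\widehat{R}_m+b_m(\cdot,\delta_m)$. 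Set
\[
\Delta_m=\sup_\lambda|\widehat{R}_m(\lambda)-R(\lambda)|+\sup_\lambda b_m(\lambda,\delta_m).
\]
By assumption~3, $\Delta_m\xrightarrow{p}0$. On the whole of $\Lambda$ we have $R-\Delta_m\le F_m\le R+\Delta_m$, using $b_m\ge 0$ or else its absolute value.

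Fix $\varepsilon>0$ small enough that $[\lambda^\star-\varepsilon,\lambda^\star+\varepsilon]$ lies in the neighbourhood of assumption~1. Continuity, strict monotonicity and the definition of $\lambda^\star$ give $R(\lambda^\star)=\alpha$, and moreover
\[
\gamma_+:=\alpha-R(\lambda^\star+\varepsilon)>0,\qquad \gamma_-:=R(\lambda^\star-\varepsilon)-\alpha>0 .
\]

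\textbf{Upper bound.} On the event $\{\Delta_m<\gamma_+\}$ we have $F_m(\lambda^\star+\varepsilon)\le R(\lambda^\star+\varepsilon)+\Delta_m<\alpha$. Hence $\lambda^\star+\varepsilon$ is feasible, and $\widehat{\lambda}_{\mathrm{CRC}}\le\lambda^\star+\varepsilon$.

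\textbf{Lower bound.} We must show that no $\lambda<\lambda^\star-\varepsilon$ is feasible, which needs $R(\lambda)>\alpha+\Delta_m$ for every such $\lambda$, not just at the single point $\lambda^\star-\varepsilon$. We use the global monotonicity of $R$, which follows from the nested-set assumption $C_{\phi,\lambda_1}\subseteq C_{\phi,\lambda_2}$ in the problem setup: $R$ is nonincreasing on all of $\Lambda$. Therefore $R(\lambda)\ge R(\lambda^\star-\varepsilon)=\alpha+\gamma_-$ for all $\lambda\le\lambda^\star-\varepsilon$. On the event $\{\Delta_m<\gamma_-\}$, every such $\lambda$ then satisfies $F_m(\lambda)\ge R(\lambda)-\Delta_m>\alpha$, so it is infeasible. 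Consequently $\widehat{\lambda}_{\mathrm{CRC}}\ge\lambda^\star-\varepsilon$.

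Combining the two bounds,
\[
\mathbb{P}\bigl(|\widehat{\lambda}_{\mathrm{CRC}}-\lambda^\star|>\varepsilon\bigr)\le\mathbb{P}\bigl(\Delta_m\ge\min(\gamma_+,\gamma_-)\bigr)\to0 .
\]
This is exactly $\widehat{\lambda}_{\mathrm{CRC}}\xrightarrow{p}\lambda^\star$, and the triangle inequality above finishes the proof.

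The lower-bound step is the main obstacle. It is the only place where behaviour of $R$ outside the local neighbourhood matters, and it relies on the global monotonicity coming from nestedness rather than on assumption~1. Two further technical points need care. If $b_m$ could be negative, the sandwich should use $|b_m|$, and the stated sup condition should be read as convergence of $\sup_\lambda|b_m|$. Also, the infimum defining $\widehat{\lambda}_{\mathrm{CRC}}$ might not be attained; the argument above never requires attainment, since the upper bound exhibits a feasible point and the lower bound rules out a whole half-line.
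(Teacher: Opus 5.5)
Your proof is correct and follows the same route as the paper: you reduce to $\widehat{\lambda}_{\mathrm{CRC}}\xrightarrow{p}\lambda^\star$ via the triangle inequality with assumption~2, then sandwich $\widehat{R}_m+b_m$ within a uniform margin of $R$ to show that $\lambda^\star+\varepsilon$ is feasible and that points below $\lambda^\star-\varepsilon$ are not. Your lower-bound step is more careful than the paper's. The paper checks infeasibility only at the single point $\lambda^\star-\varepsilon$ and then concludes $\widehat{\lambda}_{\mathrm{CRC}}>\lambda^\star-\varepsilon$, which does not follow because $\widehat{R}_m+b_m$ need not be monotone; your use of the global monotonicity of $R$ from nestedness rules out the whole half-line and so closes that gap.
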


Assumption~2 is the standard consistency requirement for the split-conformal
quantile. It holds under mild local regularity of the score distribution; in
particular, Lemma~\ref{lem:cal-accuracy} in Appendix~\ref{app:proofs} gives a DKW-based concentration bound for
$\hat{q}_{1-\alpha}$ around its population quantile. Thus the proposition should
be read as a comparison of large-sample targets, not as an equivalence of
finite-sample guarantees.

\paragraph{Interpretation.}
DCO-Warmstart gives finite-sample marginal conformal coverage, whereas CRC/BQ-style
methods give high-probability risk control. Proposition~\ref{prop:asymptotic}
only states that, when the coupled risk bound consistently estimates the
population risk boundary and its excess margin vanishes, the selected thresholds
approach the same population limit.

\section{Decoupled Conformal Optimisation}
\label{Sec: Method}

\begin{figure}[h]
    \centering
    \includegraphics[width=0.85\columnwidth]{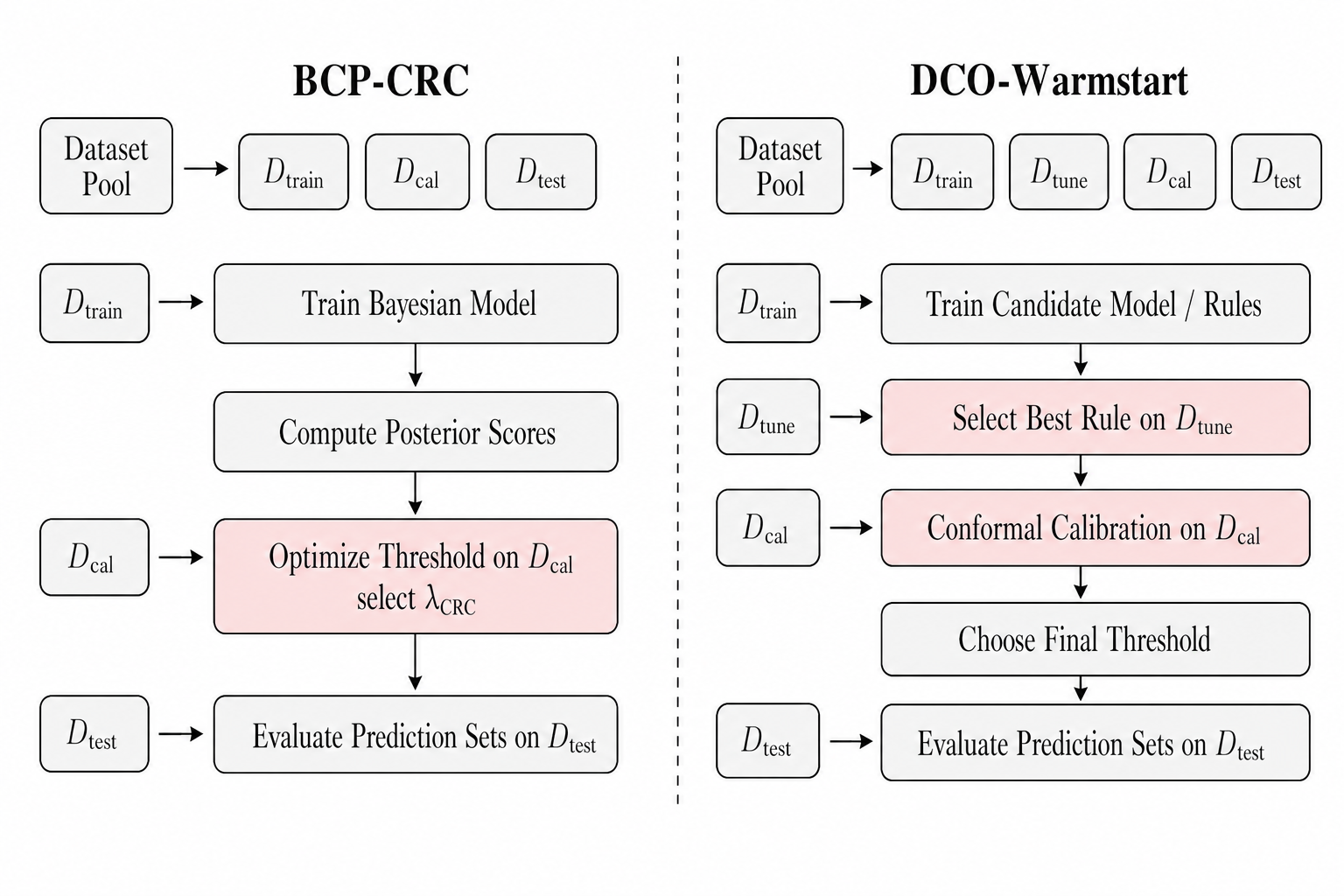}
    \caption{
    Coupled calibration versus DCO-Warmstart. In CRC/BQ-style calibration, the same calibration split is
    used both to search for an efficient threshold and to certify risk. DCO-Warmstart separates these roles:
    $D_{\mathrm{tune}}$ is used for score/model/hyperparameter selection, while
    $D_{\mathrm{cal}}$ is reserved exclusively for the final conformal quantile. This separation
    is the key condition that allows the standard split-conformal exchangeability argument to
    apply after tuning.
    }
    \label{fig:pipeline}
\end{figure}
Building on Section~\ref{Sec: Theory}, we describe the 
operational procedure for DCO-Warmstart. As shown in 
Figure~\ref{fig:pipeline}, the key departure from BCP 
is the introduction of a dedicated tuning split 
$D_{\text{tune}}$: structural selection and conformal 
calibration are allocated to independent data splits, 
so the exchangeability argument of 
Theorem~\ref{thm:dco-coverage} applies without 
modification.

\subsection{Data Splitting and Tuning}
Given exchangeable data $D_n$, we partition it into 
three disjoint splits $D_{\text{train}}$, 
$D_{\text{tune}}$, and $D_{\text{cal}}$, as defined in 
Section~\ref{Sec: Theory}. The posterior 
$\pi(\theta\mid D_{\text{train}})$ is estimated on 
$D_{\text{train}}$, thereby fixing the score function 
$S_\phi(x,y)$ with respect to all subsequent splits. 
In the BCP setting, $S_\phi(x,y) = -\log p(y\mid x, 
D_{\text{train}})$ is the negative log posterior 
predictive density, and $\phi$ encodes structural 
choices such as the score type, prior hyperparameters, 
or model architecture.

On $D_{\text{tune}}$, we compute the empirical 
miscoverage and average set size,
\begin{align}
    \widehat{R}_{\text{tune}}(\phi,\lambda)
    &=
    \frac{1}{|D_{\text{tune}}|}
    \sum_{(X_i,Y_i)\in D_{\text{tune}}}
    \mathbf{1}\{Y_i\notin C_{\phi,\lambda}(X_i)\},
    \label{eq:emp-risk}\\
    \widehat{\mathcal{S}}_{\text{tune}}(\phi,\lambda)
    &=
    \frac{1}{|D_{\text{tune}}|}
    \sum_{(X_i,Y_i)\in D_{\text{tune}}}
    |C_{\phi,\lambda}(X_i)|,
    \label{eq:emp-size}
\end{align}
where $\lambda$ denotes a candidate threshold used to 
form the tentative prediction set $C_{\phi,\lambda}(x)$ 
during tuning. We then select structural choices by solving
\begin{equation}
    (\hat{\phi}_{\text{tune}},\,\hat{\lambda}_{\text{tune}})
    =
    \operatorname*{arg\,min}_{(\phi,\lambda)\in\Phi\times\Lambda}
    \widehat{\mathcal{S}}_{\text{tune}}(\phi,\lambda)
    \quad\text{s.t.}\quad
    \widehat{R}_{\text{tune}}(\phi,\lambda)\le\alpha.
    \label{eq:dco-tuning-empirical}
\end{equation}
In practice, \eqref{eq:dco-tuning-empirical} is solved 
by grid search over $\Phi\times\Lambda$; the monotonicity of 
$\widehat{R}_{\text{tune}}(\phi,\cdot)$ in $\lambda$ 
permits an efficient line search for each fixed $\phi$. 
If no candidate pair satisfies the constraint 
$\widehat{R}_{\text{tune}}(\phi,\lambda)\le\alpha$, 
we select the candidate with the smallest empirical 
miscoverage, breaking ties in favor of smaller average 
set size.

It is important to note the distinct roles of the two 
outputs of \eqref{eq:dco-tuning-empirical}. The 
selected structure $\hat{\phi}_{\text{tune}}$ is 
carried forward as the chosen model configuration. The 
accompanying threshold $\hat{\lambda}_{\text{tune}}$, 
however, serves purely as a ranking device during the 
search over $\Phi$: it identifies how tight a threshold 
is needed for each candidate $\phi$ to satisfy the 
empirical miscoverage constraint on $D_{\text{tune}}$, 
thereby ranking candidates by their empirical efficiency. 
The threshold $\hat{\lambda}_{\text{tune}}$ is 
\emph{not} deployed. Instead, the deployed threshold is 
the conformal quantile
\begin{equation}
    \hat{q}_{1-\alpha}
    =
    \mathrm{Quantile}\!\left(
        \{S_{\hat{\phi}_{\text{tune}}}(X_i,Y_i)\}_{(X_i,Y_i)\in D_{\text{cal}}},
        \,\frac{\lceil(1-\alpha)(|D_{\text{cal}}|+1)\rceil}{|D_{\text{cal}}|}
    \right),
    \label{eq:conformal-quantile}
\end{equation}
computed afresh on the held-out calibration set 
$D_{\text{cal}}$. Because $\hat{\phi}_{\text{tune}}$ is 
determined using only $D_{\text{train}}$ and 
$D_{\text{tune}}$, and is not adapted to 
$D_{\text{cal}}$, the marginal coverage guarantee of 
Theorem~\ref{thm:dco-coverage} applies to the prediction 
set $C_{\hat{\phi}_{\text{tune}},\,\hat{q}_{1-\alpha}}$. 
We refer to this two-stage procedure--tuning over 
$(\phi,\lambda)$ pairs on $D_{\text{tune}}$ to select 
the structure, then recalibrating on $D_{\text{cal}}$ 
to obtain the deployed threshold--as 
\textbf{DCO-Warmstart}.

\subsection{DCO-Warmstart: Structure Selection with 
Conformal Calibration}

Once the structure $\hat{\phi}_{\text{tune}}$ has been 
selected on $D_{\text{tune}}$, the deployed threshold is 
determined entirely by the held-out calibration set 
$D_{\text{cal}}$. Let 
$D_{\mathrm{cal}}=\{(X_i,Y_i)\}_{i=1}^{m}$, define the 
calibration scores
\[
    S_i = S_{\hat{\phi}_{\text{tune}}}(X_i,Y_i),
    \qquad i=1,\ldots,m,
\]
and let 
\[
    S_{(1)}\le \cdots \le S_{(m)}
\]
denote the sorted calibration scores. Define
\[
    k_\alpha = \left\lceil (m+1)(1-\alpha)\right\rceil .
\]
The exact split-conformal calibration threshold is then
\begin{equation}
    \hat{q}_{1-\alpha}
    =
    \begin{cases}
        S_{(k_\alpha)}, & k_\alpha \le m,\\
        +\infty, & k_\alpha = m+1.
    \end{cases}
    \label{eq:dco-exact-quantile}
\end{equation}

The resulting prediction set is
\begin{equation}
    C_{\text{DCO}}(x)
    =
    \left\{
    y:
    S_{\hat{\phi}_{\text{tune}}}(x,y)
    \le
    \hat{q}_{1-\alpha}
    \right\}.
    \label{eq:dco-set}
\end{equation}
By Theorem~\ref{thm:dco-coverage}, this prediction set 
satisfies finite-sample marginal coverage. This is the 
primary certified procedure used throughout the paper. 
At deployment, the method uses only the structure 
selected on $D_{\mathrm{tune}}$ together with the 
conformal calibration threshold computed on 
$D_{\mathrm{cal}}$; all tuning thresholds, including 
$\hat{\lambda}_{\text{tune}}$, are discarded.

\begin{algorithm}[htbp]
\caption{DCO-Warmstart: Structure Selection with Conformal Calibration}
\label{alg:dco-v1}
\begin{algorithmic}[1]
\REQUIRE Data $D_n$, miscoverage level $\alpha$, 
         candidate structure class $\Phi$, threshold grid $\Lambda$
\STATE Partition $D_n$ into $D_{\text{train}}$, 
       $D_{\text{tune}}$, and $D_{\text{cal}}$
\STATE Estimate posterior $\pi(\theta\mid D_{\text{train}})$
       and define scores $\{S_\phi\}_{\phi\in\Phi}$
\STATE Solve~\eqref{eq:dco-tuning-empirical} on 
       $D_{\text{tune}}$ to obtain 
       $(\hat{\phi}_{\text{tune}},\hat{\lambda}_{\text{tune}})$
\STATE Discard $\hat{\lambda}_{\text{tune}}$ and retain only 
       $\hat{\phi}_{\text{tune}}$
\STATE Compute 
       $S_i=S_{\hat{\phi}_{\text{tune}}}(X_i,Y_i)$
       for each $(X_i,Y_i)\in D_{\text{cal}}$
\STATE Sort $S_1,\ldots,S_m$ into 
       $S_{(1)}\le\cdots\le S_{(m)}$
\STATE Set $k_\alpha=\lceil(m+1)(1-\alpha)\rceil$
\STATE Set 
       \[
       \hat{q}_{1-\alpha}
       =
       \begin{cases}
       S_{(k_\alpha)}, & k_\alpha\le m,\\
       +\infty, & k_\alpha=m+1
       \end{cases}
       \]
\ENSURE 
       $C_{\text{DCO}}(x)=
       \{y:
       S_{\hat{\phi}_{\text{tune}}}(x,y)
       \le\hat{q}_{1-\alpha}\}$
\end{algorithmic}
\end{algorithm}

\subsection{DirectTune}

As a diagnostic baseline, we also consider a direct 
threshold-tuning procedure. Unlike DCO-Warmstart, 
DirectTune does not perform structure selection followed 
by recalibration. Instead, for a fixed externally chosen 
structure $\phi_0\in\Phi$ and its corresponding score 
function $S_{\phi_0}$, it optimizes the threshold on 
$D_{\text{tune}}$ and deploys that threshold directly. 
Specifically, DirectTune selects
\begin{equation}
    \hat{\lambda}_{\text{tune}}
    \in
    \operatorname*{arg\,min}_{\lambda\in\Lambda}
    \widehat{\mathcal{S}}_{\text{tune}}(\phi_0,\lambda)
    \quad\text{s.t.}\quad
    \widehat{R}_{\text{tune}}(\phi_0,\lambda)\le\alpha,
    \label{eq:dco-v2}
\end{equation}
and deploys the prediction set
\begin{equation}
    C_{\text{DirectTune}}(x)
    =
    \{y:S_{\phi_0}(x,y)\le\hat{\lambda}_{\text{tune}}\}.
    \label{eq:direct-tune-set}
\end{equation}

DirectTune is not conformally certified. Because the 
threshold is selected using the same data on which 
feasibility is evaluated, the empirical constraint on 
$D_{\text{tune}}$ does not imply finite-sample marginal 
coverage for future test points. We therefore use 
DirectTune only as a diagnostic baseline to quantify the 
cost of omitting the final calibration step.

\begin{algorithm}[htbp]
\caption{DirectTune}
\label{alg:dco-v2}
\begin{algorithmic}[1]
\REQUIRE Data $D_n$, miscoverage level $\alpha$, 
         fixed structure $\phi_0$, score function $S_{\phi_0}$, 
         threshold grid $\Lambda$
\STATE Partition $D_n$ into $D_{\text{train}}$ and 
       $D_{\text{tune}}$ 
       \COMMENT{No calibration split; hence no conformal guarantee}
\STATE Estimate posterior $\pi(\theta\mid D_{\text{train}})$
       and fix the score function $S_{\phi_0}$
\STATE Solve~\eqref{eq:dco-v2} on $D_{\text{tune}}$ 
       to obtain $\hat{\lambda}_{\text{tune}}$
\ENSURE 
       $C_{\text{DirectTune}}(x)=
       \{y:S_{\phi_0}(x,y)\le\hat{\lambda}_{\text{tune}}\}$
\end{algorithmic}
\end{algorithm}

\subsection{Computational Complexity}

In practice, DCO-Warmstart scales linearly with the number of 
candidate structures $K=|\Phi|$. For a calibration 
split of size $m_{\text{cal}}$, the final conformal 
calibration step requires 
$\mathcal{O}(m_{\text{cal}}\log m_{\text{cal}})$ 
operations due to sorting the calibration scores. The 
tuning cost depends on the threshold search strategy. 
Assuming that the per-point score evaluation cost is 
$\mathcal{O}(1)$, a direct grid search over 
$\Phi\times\Lambda$ requires 
$\mathcal{O}(K|\Lambda|m_{\text{tune}})$ operations. 
The monotonicity of 
$\widehat{R}_{\text{tune}}(\phi,\cdot)$ in $\lambda$ 
can reduce this cost by permitting an efficient line 
search for each fixed $\phi$. In the BCP setting, the 
cost of evaluating the posterior predictive density 
$p(y\mid x,D_{\text{train}})$ may dominate this 
bookkeeping cost, depending on the posterior 
approximation and the number of posterior samples used.

For a finite candidate class, the tuning-stage oracle 
inequality in Proposition~\ref{prop:tuning_oracle} 
requires
\begin{equation}
    m_{\mathrm{tune}}
    =
    \Omega\!\left(
    \max\left\{
    \frac{\log(K/\eta)}{\varepsilon_R^2},
    \frac{B^2\log(K/\eta)}{\varepsilon_S^2}
    \right\}
    \right),
    \label{eq:tuning-sample-complexity}
\end{equation}
for uniform control of empirical miscoverage and 
empirical size across candidates, where 
$\varepsilon_R$ and $\varepsilon_S$ denote the desired 
uniform deviations for miscoverage and size, $B$ bounds 
the prediction-set size functional, and $\eta$ is the 
failure probability. This tuning-stage result controls 
the quality of structure selection. The final 
finite-sample marginal coverage guarantee is supplied 
separately by Theorem~\ref{thm:dco-coverage} through the 
independent calibration split $D_{\mathrm{cal}}$.
\section{Experiments}
\label{sec:experiment}

We evaluate DCO-Warmstart on regression and classification tasks. We use \emph{certified} 
to denote methods equipped with a formal risk or coverage guarantee under their 
respective calibration procedures (e.g., conformal marginal coverage for DCO-Warmstart and 
high-probability risk control guarantees for BQ). Throughout, we use $\delta$ to denote 
the failure probability and $1-\delta$ the confidence level for risk-control 
methods; BQ is run with $\delta = 0.05$, corresponding to confidence $0.95$. 
Where prior BQ literature uses $\beta$ for the confidence level, we set 
$\beta = 1-\delta$ to align notation.

Full model specification and implementation details are in Appendix~\ref{app:exp}. DCO-Warmstart serves as the primary certified method, DirectTune as a diagnostic baseline, and BQ \citep{snell2025conformal} as the closest methodological comparator; 
Split~CP and CQR serve as standard predictive baselines. DCO-Warmstart and BQ are evaluated over 50 random splits using each method's own calibration protocol; statistical reliability is assessed via paired Wilcoxon signed-rank tests.
\paragraph{Matched comparison protocol.}
To isolate the effect of decoupling from the effect of candidate search, we report several matched-budget controls. Since BQ/CRC-style calibration does not natively perform the same structural search as DCO-Warmstart, we separate three comparisons. First, we compare fixed-structure DCO-Warmstart and fixed-structure BQ/CRC using the same score/model configuration. Second, we select a structure using DCO-Warmstart on $D_{\mathrm{tune}}$ and then recalibrate that fixed structure using the BQ/CRC risk-control protocol on its combined calibration pool; this isolates the calibration mechanism after holding the selected structure fixed. Third, we report an exploratory matched-$\Phi$ BQ/CRC extension, in which BQ/CRC is evaluated over the same candidate class used by DCO-Warmstart. These controls distinguish candidate search, calibration mechanism, and data-budget allocation.

\paragraph{Experimental reporting.}
For each dataset and target coverage level, we report empirical coverage\footnote{Empirical coverage on a finite test split can fall slightly below the nominal level even when the procedure satisfies a finite-sample marginal coverage guarantee. The guarantee concerns the probability over future exchangeable test points and data splits, not deterministic coverage on every realised finite test set.}, average prediction-set size or interval width, and the 95th percentile of set size or interval width. All results are averaged over repeated random splits. When comparing DCO-Warmstart and BQ/CRC on the same splits, we report paired Wilcoxon signed-rank tests for size and coverage differences. Since DCO-Warmstart and BQ/CRC target different guarantees, we interpret these tests descriptively rather than as evidence that one guarantee dominates the other.

\subsection{Regression: Diabetes Dataset}

\paragraph{Setup.}
We use the Diabetes dataset ($n=442$, $d=10$) with 
target coverage $1-\alpha=0.8$. Each run partitions 
the data into four disjoint splits with approximate 
sizes $|D_{\text{train}}|\approx150$, 
$|D_{\text{tune}}|\approx112$, 
$|D_{\text{cal}}|\approx113$, 
$|D_{\text{test}}|\approx67$.
A sparse Bayesian linear regression model is fitted 
on $D_{\text{train}}$ via NUTS MCMC with $T=8{,}000$ 
posterior samples; Split~CP and CQR are included as 
standard predictive baselines alongside BQ. Full specifications are in 
Appendix~\ref{app:exp-regression}. 

\paragraph{DCO-Warmstart pipeline.}
DCO-Warmstart proceeds in two stages on $D_{\text{tune}}$,
independent of $D_{\text{cal}}$.
First, the prior scale $c \in \{1.0,\,0.02\}$ is 
selected by evaluating empirical coverage and 
interval width; $c=1.0$ is chosen as the most 
efficient feasible option in 28 of 50 splits.
Second, the scalar threshold $\lambda$ is optimised 
by minimising average interval width subject to 
empirical coverage $\geq 1-\alpha$. The selected 
$\hat{\lambda}_{\text{tune}}$ is then conformally 
recalibrated on $D_{\text{cal}}$, yielding certified 
DCO-Warmstart intervals; DirectTune applies $\hat{\lambda}_{\text{tune}}$ directly without recalibration.

\paragraph{Results.}
Table~\ref{tab:regression_main} summarises coverage 
and interval width over 50 splits. Across methods, 
BQ is systematically more conservative than 
Split~CP in both coverage and interval width 
(Figure~\ref{fig:aoi_splitcp}), reflecting the 
additional margin introduced by its high-probability 
calibration criterion. DCO-Warmstart achieves empirical coverage close 
to the target ($0.805\pm0.066$) and produces among 
the narrowest certified intervals ($1.914\pm0.192$), 
with both differences relative to BQ statistically 
significant (paired Wilcoxon; see caption). Direct 
tuning matches DCO-Warmstart's average width but without a 
coverage guarantee, exhibiting higher per-split 
variance consistent with Remark~\ref{rem:version2-app}. 
Figure~\ref{fig:regression_results} further shows 
that DCO-Warmstart concentrates tightly around the nominal 
level, sitting between the underconservative direct 
tuning and the overconservative BQ.

\begin{table}[h]
\centering
\caption{Regression results on the Diabetes dataset 
over 50 random splits (target 
$1-\alpha=0.8$). Paired Wilcoxon $p$-values 
(DCO-Warmstart vs.\ BQ, $n=50$): width 
$p=3.09\times10^{-12}$, coverage 
$p=2.36\times10^{-7}$.}
\label{tab:regression_main}
\resizebox{0.65\linewidth}{!}{%
\begin{tabular}{lccc}
\toprule
Method & Coverage & Avg.\ Interval Width & Certified \\
\midrule
BQ
    & $0.842 \pm 0.051$
    & $2.098 \pm 0.131$
    & \checkmark \\
Split~CP
    & $0.812 \pm 0.057$
    & $1.920 \pm 0.159$
    & \checkmark \\
CQR
    & $0.807 \pm 0.060$
    & $1.990 \pm 0.159$
    & \checkmark \\
\midrule
DCO-Warmstart
    & $\mathbf{0.805 \pm 0.066}$
    & $\mathbf{1.914 \pm 0.192}$
    & \checkmark \\
DirectTune
    & $0.805 \pm 0.072$
    & $1.914 \pm 0.163$
    & $\times$ \\
\bottomrule
\end{tabular}%
}
\end{table}

\begin{figure}[h]
    \centering
    \includegraphics[width=0.5\linewidth]
        {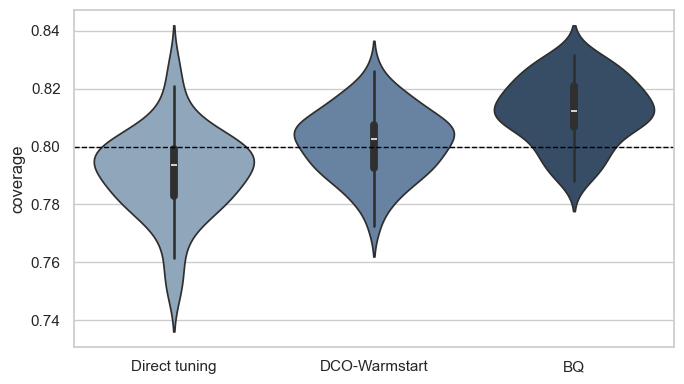}
    \caption{Coverage (a) and interval width (b) on 
    the Diabetes dataset over 50 random splits 
    ($1-\alpha=0.8$). The dashed line marks the 
    target level.}
    \label{fig:regression_results}
\end{figure}

\subsection{Classification: ImageNet-A}

\paragraph{Setup.}
We evaluate DCO-Warmstart on a filtered subset of ImageNet-A 
(198 classes) with target coverage $1-\alpha=0.8$, 
using a pretrained ResNet-50 backbone with a 
two-layer MC-dropout classification head ($T=20$ 
stochastic forward passes). Data are partitioned 
into approximately 2000 samples each for training, 
calibration, and testing, and 1000 for tuning, 
across 50 stratified random seeds. DCO-Warmstart searches over 
score type (\texttt{posterior\_nll} or 
\texttt{aoi\_nll}), dropout rate 
$\in\{0.05,0.1,0.2,0.3\}$, and hidden width 
$\in\{(512,256),(256,128)\}$, giving $|\Phi|=16$ 
candidates. Full specifications are in 
Appendix~\ref{app:imageneta}.

\paragraph{Results.}

\begin{table}[h]
\centering
\caption{ImageNet-A classification over 50 random 
splits (target $1-\alpha=0.8$). 
Paired Wilcoxon $p$-values (DCO-Warmstart vs.\ BQ, $n=50$): 
coverage $p=9.34\times10^{-9}$, average set size 
$p=9.77\times10^{-9}$, P95 set size 
$p=1.69\times10^{-9}$.}
\label{tab:imagenet_main}
\resizebox{0.65\linewidth}{!}{%
\begin{tabular}{lccc}
\toprule
Method & Coverage & Avg.\ Set Size & P95 Set Size \\
\midrule
BQ
    & $0.812\pm0.011$
    & $26.52\pm1.34$
    & $58.95\pm3.04$ \\
DCO-Warmstart    
    & $\mathbf{0.801\pm0.011}$ 
    & $\mathbf{25.26\pm1.60}$ 
    & $\mathbf{53.73\pm3.11}$ \\
DirectTune 
    & $0.791\pm0.015$ 
    & $23.83\pm1.80$ 
    & $50.65\pm4.02$ \\
\bottomrule
\end{tabular}%
}
\end{table}

DCO-Warmstart tracks the nominal coverage level closely 
($0.801$ vs target $0.8$), reduces the average 
prediction set from $26.52$ to $25.26$, and narrows 
the P95 tail from $58.95$ to $53.73$, with all 
three differences statistically significant 
(Table~\ref{tab:imagenet_main}). DirectTune 
achieves smaller sets but falls below the nominal 
coverage level in this classification experiment 
($0.791$ vs.\ target $0.8$), consistent with 
Remark~\ref{rem:version2-app}. Per-seed selection 
frequencies are in 
Table~\ref{tab:imagenet_selection_freq}.

\begin{figure}[h]
    \centering
    \includegraphics[width=0.85\linewidth]
    {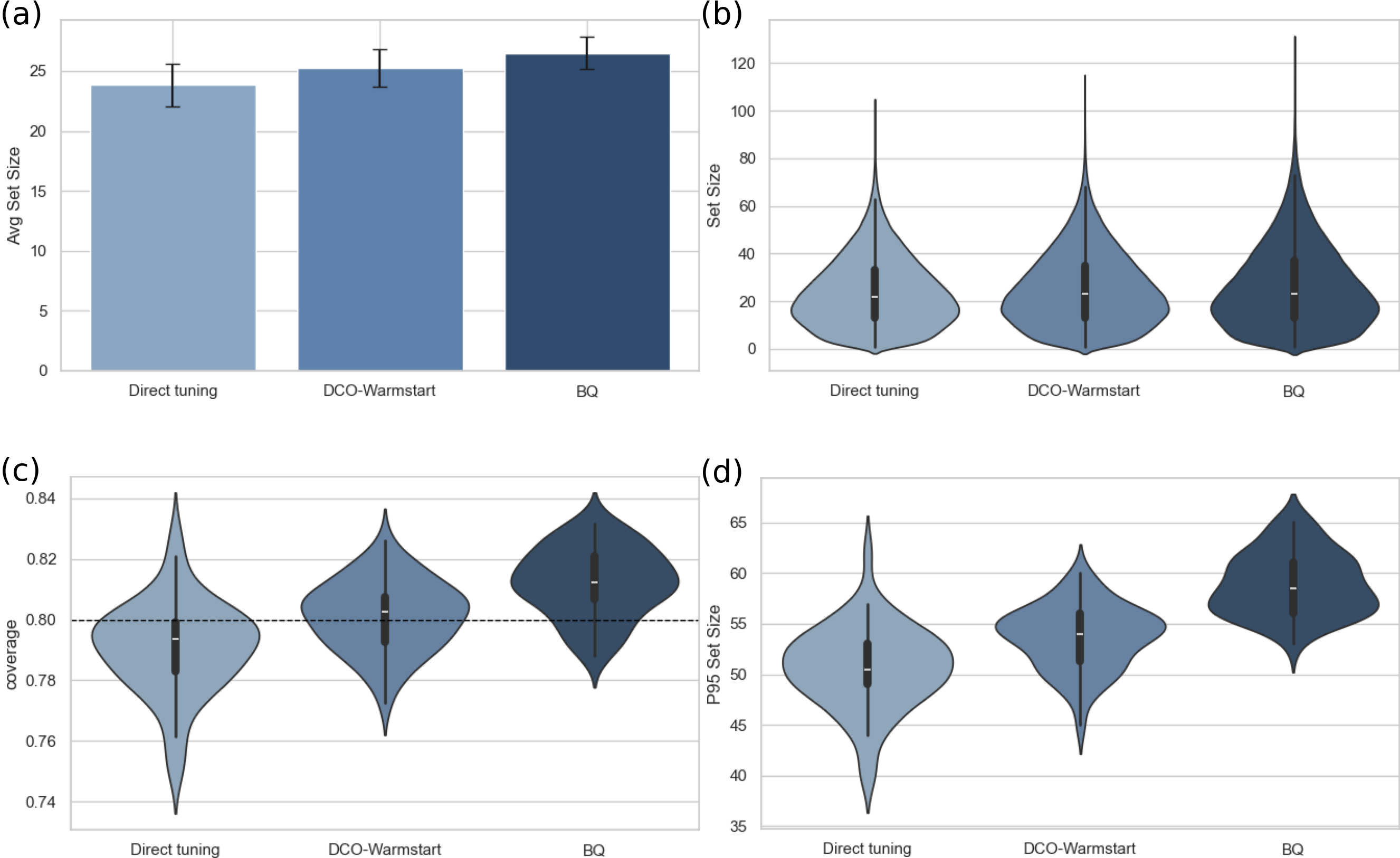}
    \caption{Classification results on ImageNet-A 
    over 50 random splits. (a)~Mean prediction set 
    size. (b)~Set size distributions. 
    (c)~Empirical coverage; dashed line marks 
    $1-\alpha=0.8$. (d)~P95 set size distributions.}
    \label{fig:imageneta_classification}
\end{figure}

\paragraph{Single-run illustration.}
Table~\ref{tab:dco_single_run} traces the threshold 
progression for a representative seed. The tuning 
threshold $\lambda_{\mathrm{tune}}=6.18$ falls below the 
nominal coverage target when applied directly for this 
seed; recalibration raises it to 
$q_{\mathrm{cal}}=6.42$, restoring coverage to 
$0.797$. BQ selects the more conservative 
$\lambda_{\mathrm{BQ}}=6.510$, recovering higher 
coverage $(0.808)$ at the cost of $4\%$ larger prediction 
sets. The achieved BQ feasibility probability at this 
threshold is 
$\widehat{p}_{\mathrm{BQ}}
= \widehat{\mathbb{P}}(L^+ \leq \alpha)
= 0.954 > 1-\delta = 0.95$, confirming that BQ satisfies 
its risk-control constraint with a small margin above the 
nominal confidence level.
\begin{table}[h]
\small
\centering
\caption{Single-seed threshold progression on ImageNet-A. 
Conformal recalibration raises the tuning threshold from 
$\lambda_{\mathrm{tune}}$ to $q_{\mathrm{cal}}$, restoring the 
split-conformal coverage guarantee. BQ selects a more conservative 
threshold $\lambda_{\mathrm{BQ}}$. Here 
$\widehat{p}_{\mathrm{BQ}} =
\widehat{\mathbb{P}}(L^+ \leq \alpha)$ is the achieved posterior feasibility 
probability estimated by Dirichlet sampling. It can exceed the nominal confidence 
level $0.95$ because BQ selects the smallest feasible threshold rather than 
enforcing equality.}
\label{tab:dco_single_run}
\resizebox{0.55\linewidth}{!}{%
\begin{tabular}{lcccc}
\toprule
& $\lambda_{\mathrm{tune}}$ 
& $q_{\mathrm{cal}}$ 
& $\lambda_{\mathrm{BQ}}$ 
& $\widehat{p}_{\mathrm{BQ}}$ \\
\midrule
Threshold & $6.180$ & $6.419$ & $6.510$ & $0.954$ \\
\midrule
Method & Coverage & Avg.\ Size & P95 & \\
\midrule
DCO-Warmstart & $0.797$ & $24.56$ & $54.0$ & \\
BQ          & $0.808$ & $25.56$ & $56.0$ & \\
\bottomrule
\end{tabular}%
}
\end{table}
\paragraph{Discussion.}
The two experiments tell a consistent story. Where 
the candidate space is large and the tuning signal 
strong, as in classification, DCO-Warmstart's efficiency gains 
over BQ are pronounced; where it is smaller and 
noisier, as in regression, the advantage is more 
modest but persists. In both cases, the core insight 
holds: decoupling optimisation from calibration is 
not only theoretically valid but empirically 
beneficial, freeing the tuning stage to select more 
efficient prediction rules while retaining the finite-sample marginal coverage guarantee of split conformal calibration.

\paragraph{Additional ablations.}
To further isolate the effect of decoupling, we additionally report ablations over candidate search, split allocation, and target coverage level. These ablations support a practical split-allocation principle: for a fixed non-training budget, the tuning split should be large enough to stabilise candidate selection, while the remaining data should be allocated to calibration to reduce conformal-quantile variability.

\begin{table}[h!]
\centering
\caption{
Matched candidate-class and calibration-mechanism controls on ImageNet-A. Because BQ/CRC
does not natively include DCO-style structural search, we report fixed-structure baselines,
BQ/CRC recalibration after DCO-Warmstart-selected structure, and an exploratory matched-$\Phi$ BQ/CRC
extension. These comparisons should be read as diagnostic controls rather than as a claim that
BQ/CRC and DCO-Warmstart implement identical search procedures.
}
\label{tab:matched_candidate}
\resizebox{\textwidth}{!}{%
\begin{tabular}{lcccc}
\toprule
Method & Candidate class & Coverage & Avg.\ size/width & P95 size/width \\
\midrule
BQ/CRC, fixed $\phi$ 
    & fixed 
    & $0.8122 \pm 0.0112$ 
    & $26.5192 \pm 1.3438$ 
    & $58.9460 \pm 3.0425$ \\
DCO-Warmstart, fixed $\phi$ 
    & fixed 
    & $0.8025 \pm 0.0108$ 
    & $25.0450 \pm 1.3687$ 
    & $55.6630 \pm 3.1192$ \\
Exploratory matched-$\Phi$ BQ/CRC 
    & matched $\Phi$ 
    & $0.8196 \pm 0.0116$ 
    & $28.0215 \pm 1.8460$ 
    & $59.8660 \pm 3.6270$ \\
DCO-Warmstart, search over $\Phi$ 
    & matched $\Phi$ 
    & $0.8013 \pm 0.0114$ 
    & $25.2584 \pm 1.5990$ 
    & $53.7260 \pm 3.1137$ \\
DirectTune 
    & matched $\Phi$ 
    & $0.7908 \pm 0.0151$ 
    & $23.8286 \pm 1.8024$ 
    & $50.6460 \pm 4.0233$ \\
\bottomrule
\end{tabular}%
}
\end{table}

\begin{table}[h!]
\centering
\caption{
DCO-Warmstart split-ratio ablation. The total non-training budget
$|D_{\mathrm{tune}}|+|D_{\mathrm{cal}}|$ is fixed, while the allocation between tuning and
calibration varies. The current codebase provides the $20/80$, $33/67$, $50/50$, $67/33$, 
and $80/20$ settings; selected-candidate stability is reported as the fraction of seeds 
choosing the modal candidate.
}
\label{tab:split_ratio}
\resizebox{\textwidth}{!}{%
\begin{tabular}{lcccc}
\toprule
Tune/calibration ratio & Coverage & Avg.\ size/width & P95 size/width & Selected candidate stability \\
\midrule
20/80 
    & $0.8001 \pm 0.0121$ 
    & $25.1344 \pm 1.6044$ 
    & $52.7240 \pm 2.7653$ 
    & $32.0\%$ \\
33/67 
    & $0.8013 \pm 0.0114$ 
    & $25.2584 \pm 1.5990$ 
    & $53.7260 \pm 3.1137$ 
    & $48.0\%$ \\
50/50 
    & $0.7996 \pm 0.0122$ 
    & $24.9321 \pm 1.5405$ 
    & $52.9460 \pm 3.7823$ 
    & $46.0\%$ \\
67/33 
    & $0.8010 \pm 0.0141$ 
    & $25.0379 \pm 1.8596$ 
    & $53.6030 \pm 4.2108$ 
    & $52.0\%$ \\
80/20 
    & $0.8027 \pm 0.0168$ 
    & $25.2462 \pm 2.6829$ 
    & $53.8880 \pm 5.7259$ 
    & $48.0\%$ \\
\bottomrule
\end{tabular}%
}
\end{table}
Table~\ref{tab:matched_candidate} shows that the exploratory matched-$\Phi$
BQ/CRC control produces a larger average set size than the fixed-structure BQ/CRC
baseline ($28.02$ versus $26.52$). The increase mainly comes from data allocation:
part of the non-training budget is used for tuning, leaving less effective calibration
information for the high-probability risk constraint and leading to a more conservative threshold.
DCO-Warmstart avoids this by separating candidate ranking on $D_{\mathrm{tune}}$ from
final calibration on $D_{\mathrm{cal}}$, keeping its average set size stable at
$25.26$.
\paragraph{Split-allocation principle.}
DCO-Warmstart introduces an allocation trade-off absent from standard split CP. Given a
fixed non-training budget
$m = m_{\mathrm{tune}} + m_{\mathrm{cal}}$, the tuning split controls the
stability of structural selection, while the calibration split controls the
variability of the final conformal quantile. If the best candidate is well
separated, a small tuning split may suffice; if several candidates have similar
efficiency, more tuning data may be needed.

Table~\ref{tab:split_ratio} evaluates this trade-off on ImageNet-A. Coverage and
average set size remain stable across all five allocations, with coverage ranging
from $0.7996$ to $0.8027$ and average set size from $24.93$ to $25.26$.
Candidate stability rises from $32\%$ under the $20/80$ split to $48\%$ under
the $33/67$ split, then fluctuates between $46\%$ and $52\%$ for larger tuning
fractions. In contrast, the standard deviation of the P95 set size increases as
$D_{\mathrm{cal}}$ shrinks, from $2.77$ at $20/80$ to $5.73$ at $80/20$.
These results suggest a simple rule: increase $m_{\mathrm{tune}}$ until
candidate selection stabilises, then allocate the remaining data to
$D_{\mathrm{cal}}$. On ImageNet-A, the $33/67$ split provides a good balance,
achieving near-maximal candidate stability while keeping the P95 variability
close to its minimum.
\begin{table}[h!]
\centering
\caption{
Performance across target coverage levels. Reporting multiple $\alpha$ values tests
whether DCO-Warmstart remains reliable beyond the single $1-\alpha=0.8$ setting.
}
\label{tab:alpha_levels}
\resizebox{\textwidth}{!}{%
\begin{tabular}{llccc}
\toprule
Dataset & Target coverage & Method & Empirical coverage & Avg.\ size/width \\
\midrule
ImageNet-A & 0.80 & BQ/CRC & $0.8122 \pm 0.0112$ & $26.5192 \pm 1.3438$ \\
ImageNet-A & 0.80 & DCO-Warmstart & $0.8012 \pm 0.0109$ & $25.2442 \pm 1.5561$ \\
ImageNet-A & 0.90 & BQ/CRC & $0.9093 \pm 0.0075$ & $49.3689 \pm 2.1015$ \\
ImageNet-A & 0.90 & DCO-Warmstart & $0.9014 \pm 0.0080$ & $46.8652 \pm 2.1532$ \\
ImageNet-A & 0.95 & BQ/CRC & $0.9580 \pm 0.0053$ & $74.4952 \pm 3.1281$ \\
ImageNet-A & 0.95 & DCO-Warmstart & $0.9515 \pm 0.0067$ & $71.2709 \pm 3.6914$ \\
CIFAR-100 & 0.80 & BQ/CRC & $0.8110 \pm 0.0106$ & $2.7219 \pm 0.1434$ \\
CIFAR-100 & 0.80 & DCO-Warmstart & $0.7994 \pm 0.0125$ & $2.5270 \pm 0.1634$ \\
CIFAR-100 & 0.90 & BQ/CRC & $0.9071 \pm 0.0087$ & $5.8332 \pm 0.4278$ \\
CIFAR-100 & 0.90 & DCO-Warmstart & $0.8978 \pm 0.0101$ & $5.3802 \pm 0.4791$ \\
CIFAR-100 & 0.95 & BQ/CRC & $0.9547 \pm 0.0060$ & $10.7320 \pm 0.7601$ \\
CIFAR-100 & 0.95 & DCO-Warmstart & $0.9488 \pm 0.0085$ & $10.0229 \pm 1.2058$ \\
\bottomrule
\end{tabular}%
}
\end{table}

\begin{table}[H]
\centering
\caption{
Additional regression benchmarks across target coverage levels on California Housing and
Concrete. Both BQ/CRC and DCO-Warmstart are reported using completed reruns under the matched-budget
protocol.
}
\label{tab:regression_alpha_levels}
\resizebox{\textwidth}{!}{%
\begin{tabular}{llccc}
\toprule
Dataset & Target coverage & Method & Empirical coverage & Avg.\ width \\
\midrule
California Housing & 0.80 & BQ/CRC & $0.8080 \pm 0.0273$ & $1.3507 \pm 0.0472$ \\
California Housing & 0.80 & DCO-Warmstart & $0.7902 \pm 0.0287$ & $1.2917 \pm 0.0558$ \\
California Housing & 0.90 & BQ/CRC & $0.9112 \pm 0.0185$ & $1.9377 \pm 0.0867$ \\
California Housing & 0.90 & DCO-Warmstart & $0.8996 \pm 0.0205$ & $1.8339 \pm 0.0932$ \\
California Housing & 0.95 & BQ/CRC & $0.9586 \pm 0.0118$ & $2.7078 \pm 0.1572$ \\
California Housing & 0.95 & DCO-Warmstart & $0.9511 \pm 0.0144$ & $2.5302 \pm 0.1768$ \\
Concrete & 0.80 & BQ/CRC & $0.8226 \pm 0.0432$ & $1.7109 \pm 0.0818$ \\
Concrete & 0.80 & DCO-Warmstart & $0.7934 \pm 0.0490$ & $1.5821 \pm 0.1278$ \\
Concrete & 0.90 & BQ/CRC & $0.9195 \pm 0.0309$ & $2.2582 \pm 0.1000$ \\
Concrete & 0.90 & DCO-Warmstart & $0.8937 \pm 0.0316$ & $2.0915 \pm 0.1252$ \\
Concrete & 0.95 & BQ/CRC & $0.9640 \pm 0.0188$ & $2.7371 \pm 0.1268$ \\
Concrete & 0.95 & DCO-Warmstart & $0.9462 \pm 0.0252$ & $2.5274 \pm 0.1579$ \\
\bottomrule
\end{tabular}%
}
\end{table}

\section{Conclusion}
\label{sec:conclusion}

We studied whether optimisation and final conformal calibration must use the same
held-out data in Bayesian conformal optimisation pipelines. When the target is
finite-sample marginal conformal coverage, they need not. Decoupled Conformal
Optimisation (DCO) assigns these roles to separate splits. The tuning split
selects the score, model, prior, or threshold-search configuration. The
calibration split is then used only to compute the final conformal quantile.

This separation preserves the standard split-conformal logic. Once the tuned
structure is fixed independently of $D_{\mathrm{cal}}$, the calibration scores
and the test score remain exchangeable. Theorem~\ref{thm:dco-coverage} therefore
gives finite-sample marginal coverage for any candidate class $\Phi$, without a
confidence parameter and without a multiple-testing correction over the
candidate class. Proposition~\ref{prop:tuning_oracle} complements this result by
describing when the tuning split is large enough to select an efficient candidate
from a finite class. Proposition~\ref{prop:asymptotic} provides a large-sample
comparison: under consistency assumptions on the coupled risk bound, DCO-Warmstart and
CRC/BQ-style calibration converge to the same population threshold, although
their finite-sample guarantees remain distinct.

This distinction is important. DCO-Warmstart is not a replacement for CRC/BQ-style methods
when the scientific goal is high-probability risk control. Those methods provide
a different type of guarantee, controlled by a confidence level $1-\delta$. DCO-Warmstart
is aimed at the marginal coverage setting. In that setting, coupling optimisation
and calibration is sufficient but not necessary. An independent tuning split can
be used for efficiency-oriented search, while a fresh calibration split supplies
the conformal coverage guarantee.

The experiments support this view. Under matched protocols, DCO-Warmstart retains the
split-conformal marginal coverage guarantee and, in finite test evaluations,
tracks the nominal level closely while often producing smaller prediction sets or intervals than
coupled high-probability calibration baselines. On ImageNet-A, the average set
size decreases from $26.52$ to $25.26$, with the 95th-percentile set size
decreasing from $58.95$ to $53.73$, indicating improvement in the tail of the
set-size distribution. On the Diabetes regression benchmark, the average interval
width decreases from $2.098$ to $1.914$. DirectTune illustrates the cost of
omitting the final calibration step: it may achieve smaller sets at the cost of
under-coverage or increased coverage variability, since it lacks a finite-sample
conformal guarantee.

\subsection{Limitations and future work.}
DCO-Warmstart introduces a practical allocation problem. Data must be divided among
training, tuning, and calibration, and the best split ratio depends on the task.
The tuning split should be large enough to stabilise candidate selection. The
calibration split should remain large enough to reduce quantile variability. A
simple diagnostic is to monitor whether the selected
$\hat{\phi}_{\mathrm{tune}}$ changes under repeated random splits of the tuning
data. Stability of this selection suggests that the tuning budget is sufficient; the remaining non-training data can then be allocated to calibration.

Several theoretical questions remain open. The current tuning-stage oracle result
covers finite candidate classes. Extensions to adaptive split allocation,
continuous hyperparameter spaces, and end-to-end efficiency after recalibration
would make the theory more complete. The asymptotic comparison in
Proposition~\ref{prop:asymptotic} also relies on consistency of the coupled risk
bound. Understanding when this condition holds for specific CRC/BQ constructions,
and how large the finite-sample conservativeness gap remains for fixed
$\delta$, are useful directions for future work.

\section*{Acknowledgement}
This work was supported by the Engineering and Physical Sciences Research Council (EPSRC) under Grant No. EP/Y030826/1. 
\newpage
\bibliography{reference}
\clearpage
\appendix
\appendix
\section{Split Conformal Prediction}
\label{app:splitcp}

Algorithm~\ref{alg:splitcp} summarises the standard 
split conformal prediction pipeline. The procedure 
requires no assumptions beyond exchangeability of the 
data and a bounded non-conformity score, and the 
coverage guarantee follows directly from the order 
statistics of the calibration scores.

\begin{algorithm}[h]
\caption{Split Conformal Prediction}
\label{alg:splitcp}
\begin{algorithmic}[1]
\REQUIRE Training data $D_{\text{train}}$, 
         calibration data $D_{\text{cal}}$,
         miscoverage level $\alpha$
\STATE Train model $\hat{f}$ on $D_{\text{train}}$
\STATE Compute non-conformity scores 
       $S_i = S(X_i, Y_i)$ for each 
       $(X_i, Y_i) \in D_{\text{cal}}$
\STATE Sort $S_1,\ldots,S_m$ into 
       $S_{(1)} \le \cdots \le S_{(m)}$
\STATE Set $k_\alpha=\left\lceil (m+1)(1-\alpha)\right\rceil$
\STATE Set $\hat{q}_{1-\alpha}=S_{(k_\alpha)}$ if 
       $k_\alpha\le m$, and 
       $\hat{q}_{1-\alpha}=+\infty$ otherwise
\ENSURE $C(x) = \{y : S(x, y) \leq \hat{q}_{1-\alpha}\}$,
        which satisfies 
        $\mathbb{P}(Y_{n+1} \in C(X_{n+1})) \geq 1 - \alpha$
\end{algorithmic}
\end{algorithm}

The marginal coverage guarantee follows from a standard 
exchangeability argument. Since the calibration points 
and the test point $(X_{n+1}, Y_{n+1})$ are exchangeable, 
the test non-conformity score $S_{n+1}$ is equally 
likely to fall at any rank among 
$S_1, \ldots, S_{m_{\text{cal}}}, S_{n+1}$. Therefore,
\begin{equation}
    \mathbb{P}(Y_{n+1} \in C(X_{n+1})) 
    = \mathbb{P}(S_{n+1} \leq \hat{q}_{1-\alpha}) 
    \geq 1 - \alpha.
\end{equation}
Crucially, $\hat{q}_{1-\alpha}$ is an order statistic 
of $D_{\text{cal}}$ and involves no optimisation; the 
efficiency of $C(x)$ therefore depends entirely on the 
expressiveness of the non-conformity score $S(x,y)$.
\section{Proofs and Technical Details}
\label{app:proofs}

This appendix provides complete proofs for all theoretical 
results stated in Section~\ref{Sec: Theory}, together with 
supporting lemmas and extensions.

\subsection{Proof of Theorem~\ref{thm:dco-coverage} 
(Marginal Coverage of DCO-Warmstart)}

\begin{proof}
Let $\hat{\phi}_{\text{tune}}$ be selected using 
$D_{\text{train}}$ and $D_{\text{tune}}$ only. Let 
$D_{\mathrm{cal}}=\{(X_i,Y_i)\}_{i=1}^{m}$, and define 
the calibration scores
\begin{equation}
    S_i = S_{\hat{\phi}_{\text{tune}}}(X_i,Y_i),
    \qquad i=1,\ldots,m.
\end{equation}
Define the test score as
\begin{equation}
    S_{m+1}
    =
    S_{\hat{\phi}_{\text{tune}}}(X_{m+1},Y_{m+1}).
\end{equation}
Let
\begin{equation}
    S_{(1)}\le\cdots\le S_{(m)}
\end{equation}
denote the sorted calibration scores, and define
\begin{equation}
    k_\alpha
    =
    \left\lceil(m+1)(1-\alpha)\right\rceil .
\end{equation}
The split-conformal calibration threshold is
\begin{equation}
    \hat{q}_{1-\alpha}
    =
    \begin{cases}
        S_{(k_\alpha)}, & k_\alpha \le m,\\
        +\infty,        & k_\alpha = m+1.
    \end{cases}
\end{equation}

Since $\hat{\phi}_{\text{tune}}$ depends only on 
$D_{\text{train}}$ and $D_{\text{tune}}$, and is not 
adapted to $D_{\mathrm{cal}}$, the score function 
$S_{\hat{\phi}_{\text{tune}}}(\cdot,\cdot)$ is fixed 
with respect to the calibration data. Conditional on 
$D_{\text{train}}$, $D_{\text{tune}}$, and 
$\hat{\phi}_{\text{tune}}$, the calibration points and 
the test point,
\begin{equation}
    (X_1,Y_1),\ldots,(X_m,Y_m),(X_{m+1},Y_{m+1}),
\end{equation}
are exchangeable. Therefore, the scores
\begin{equation}
    S_1,\ldots,S_m,S_{m+1}
\end{equation}
are exchangeable conditional on 
$D_{\text{train}}$, $D_{\text{tune}}$, and 
$\hat{\phi}_{\text{tune}}$.

By the standard split-conformal coverage argument 
\cite{vovk2005algorithmic,shafer2008tutorial}, 
exchangeability of the scores implies
\begin{equation}
    \mathbb{P}\!\left(
        S_{m+1}\le\hat{q}_{1-\alpha}
        \;\middle|\;
        D_{\text{train}},D_{\text{tune}},
        \hat{\phi}_{\text{tune}}
    \right)
    \ge 1-\alpha.
\end{equation}
Since
\begin{equation}
    Y_{m+1}\in 
    C_{\hat{\phi}_{\text{tune}},\hat{q}_{1-\alpha}}(X_{m+1})
    \quad
    \Longleftrightarrow
    \quad
    S_{m+1}\le \hat{q}_{1-\alpha},
\end{equation}
taking expectations over $D_{\text{train}}$, 
$D_{\text{tune}}$, and $\hat{\phi}_{\text{tune}}$ gives
\begin{equation}
    \mathbb{P}\!\left(
        Y_{m+1}\in
        C_{\hat{\phi}_{\text{tune}},\hat{q}_{1-\alpha}}
        (X_{m+1})
    \right)
    \ge 1-\alpha.
    \qedhere
\end{equation}
\end{proof}

\begin{remark}[Score selection and multiple testing]
\label{rem:multiple-testing-app}
The argument above holds for any candidate class $\Phi$, 
finite or infinite. The key requirement is that calibration 
is applied to a single fixed structure 
$\hat{\phi}_{\text{tune}}$ selected without using 
$D_{\mathrm{cal}}$. Unlike procedures that select among 
multiple thresholds using the calibration data, DCO-Warmstart does 
not require a union bound or family-wise error correction 
over $\Phi$ for its final conformal coverage guarantee.
\end{remark}

\begin{remark}[Failure of DirectTune]
\label{rem:version2-app}
DirectTune selects
\begin{equation}
    \hat{\lambda}_{\text{tune}}
    =
    \operatorname*{arg\,min}_{\lambda}
    \widehat{\mathcal{S}}_{\text{tune}}(\lambda)
    \quad\text{s.t.}\quad
    \widehat{R}_{\text{tune}}(\lambda)\le\alpha,
\end{equation}
and deploys this threshold directly without further 
calibration. This procedure is not conformally certified. 
For any fixed $\lambda$, the empirical risk 
$\widehat{R}_{\text{tune}}(\lambda)$ estimates the 
corresponding population risk $R(\lambda)$. However, the 
selected threshold $\hat{\lambda}_{\text{tune}}$ is itself 
a function of $D_{\text{tune}}$. Thus, the same data are 
used both to choose the threshold and to certify its 
empirical feasibility. This selection effect can introduce 
optimistic bias, so the empirical feasibility constraint 
does not imply finite-sample marginal coverage for future 
test points. In finite samples, this can manifest as 
undercoverage or increased coverage variability.
\end{remark}

\subsection{Proof of Proposition~\ref{prop:tuning_oracle} 
(Tuning oracle inequality for a finite search class)}

\begin{proof}
Let $\mathcal{A}$ denote a finite search class, where each 
$a\in\mathcal{A}$ represents a candidate procedure, such as 
a pair $a=(\phi,\lambda)$. Define its population 
miscoverage and population size by
\begin{equation}
    R(a)
    =
    \mathbb{P}\!\left\{Y\notin C_a(X)\right\},
    \qquad
    \mathcal{S}(a)
    =
    \mathbb{E}\!\left[s(C_a(X))\right],
\end{equation}
and their empirical counterparts on $D_{\mathrm{tune}}$ by
\begin{equation}
    \widehat{R}_{\mathrm{tune}}(a)
    =
    \frac{1}{m_{\mathrm{tune}}}
    \sum_{i=1}^{m_{\mathrm{tune}}}
    \mathbf{1}\{Y_i\notin C_a(X_i)\},
\end{equation}
and
\begin{equation}
    \widehat{\mathcal{S}}_{\mathrm{tune}}(a)
    =
    \frac{1}{m_{\mathrm{tune}}}
    \sum_{i=1}^{m_{\mathrm{tune}}}
    s(C_a(X_i)).
\end{equation}
Assume that the size functional is bounded as
\begin{equation}
    0\le s(C_a(X))\le B
    \qquad
    \text{for all } a\in\mathcal{A}.
\end{equation}

For each fixed $a\in\mathcal{A}$, the random variable 
$\mathbf{1}\{Y\notin C_a(X)\}$ is bounded in $[0,1]$. 
Hoeffding's inequality gives
\begin{equation}
    \mathbb{P}\!\left(
        \left|
            \widehat{R}_{\mathrm{tune}}(a)-R(a)
        \right|
        >
        \varepsilon_R
    \right)
    \le
    2\exp(-2m_{\mathrm{tune}}\varepsilon_R^2).
\end{equation}
Applying a union bound over $a\in\mathcal{A}$ yields
\begin{equation}
    \mathbb{P}\!\left(
        \sup_{a\in\mathcal{A}}
        \left|
            \widehat{R}_{\mathrm{tune}}(a)-R(a)
        \right|
        >
        \varepsilon_R
    \right)
    \le
    2|\mathcal{A}|
    \exp(-2m_{\mathrm{tune}}\varepsilon_R^2).
\end{equation}

Similarly, since $s(C_a(X))/B\in[0,1]$, Hoeffding's 
inequality gives
\begin{equation}
    \mathbb{P}\!\left(
        \left|
            \widehat{\mathcal{S}}_{\mathrm{tune}}(a)
            -
            \mathcal{S}(a)
        \right|
        >
        \varepsilon_S
    \right)
    \le
    2\exp\!\left(
        -\frac{2m_{\mathrm{tune}}\varepsilon_S^2}{B^2}
    \right).
\end{equation}
A second union bound over $a\in\mathcal{A}$ yields
\begin{equation}
    \mathbb{P}\!\left(
        \sup_{a\in\mathcal{A}}
        \left|
            \widehat{\mathcal{S}}_{\mathrm{tune}}(a)
            -
            \mathcal{S}(a)
        \right|
        >
        \varepsilon_S
    \right)
    \le
    2|\mathcal{A}|
    \exp\!\left(
        -\frac{2m_{\mathrm{tune}}\varepsilon_S^2}{B^2}
    \right).
\end{equation}

Therefore, if
\begin{equation}
    m_{\mathrm{tune}}
    \ge
    \max\left\{
        \frac{\log(4|\mathcal{A}|/\eta)}
        {2\varepsilon_R^2},
        \frac{B^2\log(4|\mathcal{A}|/\eta)}
        {2\varepsilon_S^2}
    \right\},
    \label{eq:tuning-sample-complexity-app}
\end{equation}
then with probability at least $1-\eta$, the following 
two uniform deviation bounds hold simultaneously:
\begin{equation}
    \sup_{a\in\mathcal{A}}
    \left|
        \widehat{R}_{\mathrm{tune}}(a)-R(a)
    \right|
    \le
    \varepsilon_R,
\end{equation}
and
\begin{equation}
    \sup_{a\in\mathcal{A}}
    \left|
        \widehat{\mathcal{S}}_{\mathrm{tune}}(a)
        -
        \mathcal{S}(a)
    \right|
    \le
    \varepsilon_S.
\end{equation}

On this event, let $\widehat{a}$ be an empirical minimizer 
of average size subject to the empirical miscoverage 
constraint:
\begin{equation}
    \widehat{a}
    \in
    \operatorname*{arg\,min}_{a\in\mathcal{A}}
    \widehat{\mathcal{S}}_{\mathrm{tune}}(a)
    \quad
    \text{s.t.}
    \quad
    \widehat{R}_{\mathrm{tune}}(a)\le\alpha.
\end{equation}
Then its population miscoverage satisfies
\begin{equation}
    R(\widehat{a})
    \le
    \widehat{R}_{\mathrm{tune}}(\widehat{a})
    +
    \varepsilon_R
    \le
    \alpha+\varepsilon_R.
\end{equation}
Moreover, any candidate $a\in\mathcal{A}$ satisfying
\begin{equation}
    R(a)\le \alpha-\varepsilon_R
\end{equation}
is empirically feasible, because
\begin{equation}
    \widehat{R}_{\mathrm{tune}}(a)
    \le
    R(a)+\varepsilon_R
    \le
    \alpha.
\end{equation}
Hence, for every such candidate $a$, the empirical 
minimizer satisfies
\begin{equation}
    \widehat{\mathcal{S}}_{\mathrm{tune}}(\widehat{a})
    \le
    \widehat{\mathcal{S}}_{\mathrm{tune}}(a).
\end{equation}
Using the uniform size deviation bound on both sides gives
\begin{equation}
    \mathcal{S}(\widehat{a})
    \le
    \widehat{\mathcal{S}}_{\mathrm{tune}}(\widehat{a})
    +
    \varepsilon_S
    \le
    \widehat{\mathcal{S}}_{\mathrm{tune}}(a)
    +
    \varepsilon_S
    \le
    \mathcal{S}(a)+2\varepsilon_S.
\end{equation}
Taking the infimum over all candidates satisfying 
$R(a)\le\alpha-\varepsilon_R$ yields
\begin{equation}
    \mathcal{S}(\widehat{a})
    \le
    \inf_{a\in\mathcal{A}:\,R(a)\le\alpha-\varepsilon_R}
    \mathcal{S}(a)
    +
    2\varepsilon_S.
\end{equation}
Thus, with probability at least $1-\eta$, the selected 
candidate has population miscoverage at most 
$\alpha+\varepsilon_R$ and size within $2\varepsilon_S$ 
of the best candidate whose population miscoverage is at 
most $\alpha-\varepsilon_R$.

If one instead uses the tightened empirical constraint
\begin{equation}
    \widehat{R}_{\mathrm{tune}}(a)\le\alpha-\varepsilon_R,
\end{equation}
then the same argument gives the stronger feasibility 
statement
\begin{equation}
    R(\widehat{a})\le\alpha,
\end{equation}
with the oracle comparison taken over candidates satisfying
\begin{equation}
    R(a)\le\alpha-2\varepsilon_R.
\end{equation}
This proves the proposition.
\end{proof}

\paragraph{Possible extension.}
An extension to infinite or continuous candidate classes 
would require additional uniform-convergence machinery, 
such as covering-number or empirical-process arguments, 
together with explicit regularity assumptions on the 
search class. We leave such extensions to future work 
and do not claim them in the present paper.
\subsection{Calibration Quantile Accuracy}

\begin{lemma}[Calibration quantile accuracy]
\label{lem:cal-accuracy}
Fix a tuned structure $\hat{\phi}_{\text{tune}}$ and let 
$F_{\hat{\phi}_{\text{tune}}}$ denote the distribution 
function of 
$S_{\hat{\phi}_{\text{tune}}}(X,Y)$. Define the population 
quantile
\begin{equation}
    q_{\hat{\phi}_{\text{tune}}}^{\star}
    =
    \inf\left\{
        q :
        F_{\hat{\phi}_{\text{tune}}}(q) \ge 1-\alpha
    \right\}.
\end{equation}
Let $m=m_{\mathrm{cal}}$, let 
$S_{(1)}\le\cdots\le S_{(m)}$ denote the sorted calibration 
scores, and define
\begin{equation}
    k_\alpha
    =
    \left\lceil (m+1)(1-\alpha)\right\rceil .
\end{equation}
Assume that $k_\alpha\le m$, and let
\begin{equation}
    \hat{q}_{1-\alpha}
    =
    S_{(k_\alpha)}.
\end{equation}
Suppose that $F_{\hat{\phi}_{\text{tune}}}$ is continuous 
and has density bounded below by $c>0$ in a neighbourhood 
of $q_{\hat{\phi}_{\text{tune}}}^{\star}$. Then, for any 
$t>0$ such that 
$q_{\hat{\phi}_{\text{tune}}}^{\star}\pm t$ remain in this 
neighbourhood,
\begin{equation}
    \mathbb{P}\!\left(
        \left|
            \hat{q}_{1-\alpha}
            -
            q_{\hat{\phi}_{\text{tune}}}^{\star}
        \right|
        > t
    \right)
    \le
    2\exp\!\left\{
        -2m
        \left(
            c t - \frac{2}{m}
        \right)_{+}^{2}
    \right\},
    \label{eq:cal-quantile-accuracy}
\end{equation}
where $(u)_{+}=\max\{u,0\}$. In particular, if 
$ct>2/m$, the right-hand side decays exponentially in 
$m t^2$.
\end{lemma}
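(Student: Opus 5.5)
The plan is to condition on $D_{\mathrm{train}}\cup D_{\mathrm{tune}}$, so that $\hat{\phi}_{\text{tune}}$, and with it $F:=F_{\hat{\phi}_{\text{tune}}}$, are fixed and the calibration scores $S_1,\ldots,S_m$ are i.i.d.\ with distribution function $F$. Write $\widehat F_m$ for their empirical distribution function and $q^\star:=q^\star_{\hat{\phi}_{\text{tune}}}$. Because $F$ is continuous, $F(q^\star)=1-\alpha$. I will bound the two one-sided events $\{\hat{q}_{1-\alpha}>q^\star+t\}$ and $\{\hat{q}_{1-\alpha}<q^\star-t\}$ separately, reduce each to a statement about $\sup_s|\widehat F_m(s)-F(s)|$, and then apply the DKW inequality with Massart's constant. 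At the end, the conditional bound can be integrated over the tuning data.

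For the upper tail, note that $\hat{q}_{1-\alpha}=S_{(k_\alpha)}>q^\star+t$ holds exactly when fewer than $k_\alpha$ scores are $\le q^\star+t$, that is, when $\widehat F_m(q^\star+t)<k_\alpha/m$. The density lower bound gives $F(q^\star+t)\ge 1-\alpha+ct$. Also $k_\alpha/m\le ((m+1)(1-\alpha)+1)/m\le 1-\alpha+2/m$. So on this event $F(q^\star+t)-\widehat F_m(q^\star+t)>ct-2/m$.

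For the lower tail, $S_{(k_\alpha)}<q^\star-t$ forces $\widehat F_m(q^\star-t)\ge k_\alpha/m\ge (m+1)(1-\alpha)/m\ge 1-\alpha$. Since $F(q^\star-t)\le 1-\alpha-ct$, this gives $\widehat F_m(q^\star-t)-F(q^\star-t)\ge ct\ge ct-2/m$. The lower-tail rounding slack is actually favourable; the $2/m$ slack is needed only on the upper side, and I will take it uniformly for simplicity.

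Both events lie in $\{\sup_s|\widehat F_m(s)-F(s)|\ge ct-2/m\}$. When $ct>2/m$, DKW with Massart's constant gives probability at most $2\exp\{-2m(ct-2/m)^2\}$. When $ct\le 2/m$, the right-hand side with the positive part is $2\ge1$, so the bound is trivial. Taking the union of the two tails inside the single DKW event avoids an extra factor of 2, which matches the stated constant.

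The main obstacle is the bookkeeping between the order-statistic index $k_\alpha=\lceil (m+1)(1-\alpha)\rceil$ and the level $1-\alpha$. I need to check that $k_\alpha/m-(1-\alpha)\le 2/m$ and that boundary ties ($\le$ versus $<$) go the right way; continuity of $F$ makes ties a null event. A second point is the requirement that $q^\star\pm t$ stay in the neighbourhood where the density is at least $c$, so that the linear increments $F(q^\star\pm t)\gtrless 1-\alpha\pm ct$ are valid. This is exactly the hypothesis on $t$ in the lemma.
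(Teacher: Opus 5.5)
Your proposal is correct and takes essentially the same route as the paper. Both use the same rank bookkeeping $1-\alpha\le k_\alpha/m\le 1-\alpha+2/m$, reduce both one-sided tail events to the single event $\{\sup_s|\widehat F_m(s)-F(s)|\ge (ct-2/m)_+\}$, and apply DKW (Massart) once. Two steps the paper leaves implicit you state explicitly: continuity gives $F(q^\star)=1-\alpha$, which the lower-tail increment needs, and the case $ct\le 2/m$ is trivial.
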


\begin{proof}
Let
\begin{equation}
    p = 1-\alpha.
\end{equation}
Let $\widehat{F}_{m}$ be the empirical distribution 
function of the $m$ calibration scores. By the 
Dvoretzky--Kiefer--Wolfowitz inequality,
\begin{equation}
    \mathbb{P}\!\left(
        \sup_{q}
        \left|
            \widehat{F}_{m}(q)
            -
            F_{\hat{\phi}_{\text{tune}}}(q)
        \right|
        > \epsilon
    \right)
    \le
    2\exp(-2m\epsilon^2).
    \label{eq:dkw-calibration}
\end{equation}
Because
\begin{equation}
    k_\alpha
    =
    \left\lceil (m+1)p\right\rceil ,
\end{equation}
we have
\begin{equation}
    \frac{k_\alpha}{m}
    \le
    p + \frac{2}{m}.
    \label{eq:rank-upper}
\end{equation}
Moreover, since $k_\alpha\ge (m+1)p$, we also have
\begin{equation}
    \frac{k_\alpha}{m}
    \ge
    p.
    \label{eq:rank-lower}
\end{equation}

By the lower density assumption, for 
$q_{\hat{\phi}_{\text{tune}}}^{\star}+t$ in the stated 
neighbourhood,
\begin{equation}
    F_{\hat{\phi}_{\text{tune}}}
    \!\left(
        q_{\hat{\phi}_{\text{tune}}}^{\star}+t
    \right)
    \ge
    p + ct.
    \label{eq:cdf-right}
\end{equation}
If 
$\hat{q}_{1-\alpha}>q_{\hat{\phi}_{\text{tune}}}^{\star}+t$, 
then fewer than $k_\alpha$ calibration scores are less than 
or equal to 
$q_{\hat{\phi}_{\text{tune}}}^{\star}+t$, and hence
\begin{equation}
    \widehat{F}_{m}
    \!\left(
        q_{\hat{\phi}_{\text{tune}}}^{\star}+t
    \right)
    <
    \frac{k_\alpha}{m}.
\end{equation}
Combining this with \eqref{eq:rank-upper} and 
\eqref{eq:cdf-right} gives
\begin{equation}
    \sup_q
    \left|
        \widehat{F}_{m}(q)
        -
        F_{\hat{\phi}_{\text{tune}}}(q)
    \right|
    >
    ct-\frac{2}{m}.
    \label{eq:right-tail-quantile}
\end{equation}

Similarly, by the lower density assumption, for 
$q_{\hat{\phi}_{\text{tune}}}^{\star}-t$ in the stated 
neighbourhood,
\begin{equation}
    F_{\hat{\phi}_{\text{tune}}}
    \!\left(
        q_{\hat{\phi}_{\text{tune}}}^{\star}-t
    \right)
    \le
    p - ct.
    \label{eq:cdf-left}
\end{equation}
If 
$\hat{q}_{1-\alpha}<q_{\hat{\phi}_{\text{tune}}}^{\star}-t$, 
then at least $k_\alpha$ calibration scores are less than 
or equal to 
$q_{\hat{\phi}_{\text{tune}}}^{\star}-t$, and hence
\begin{equation}
    \widehat{F}_{m}
    \!\left(
        q_{\hat{\phi}_{\text{tune}}}^{\star}-t
    \right)
    \ge
    \frac{k_\alpha}{m}.
\end{equation}
Using \eqref{eq:rank-lower} and \eqref{eq:cdf-left}, this 
implies
\begin{equation}
    \sup_q
    \left|
        \widehat{F}_{m}(q)
        -
        F_{\hat{\phi}_{\text{tune}}}(q)
    \right|
    >
    ct.
    \label{eq:left-tail-quantile}
\end{equation}

Combining \eqref{eq:right-tail-quantile} and 
\eqref{eq:left-tail-quantile}, we obtain
\begin{equation}
    \left\{
        \left|
            \hat{q}_{1-\alpha}
            -
            q_{\hat{\phi}_{\text{tune}}}^{\star}
        \right|
        > t
    \right\}
    \subseteq
    \left\{
        \sup_q
        \left|
            \widehat{F}_{m}(q)
            -
            F_{\hat{\phi}_{\text{tune}}}(q)
        \right|
        >
        \left(
            ct-\frac{2}{m}
        \right)_{+}
    \right\}.
\end{equation}
Applying \eqref{eq:dkw-calibration} with 
\begin{equation}
    \epsilon
    =
    \left(
        ct-\frac{2}{m}
    \right)_{+}
\end{equation}
gives \eqref{eq:cal-quantile-accuracy}.
\end{proof}

\subsection{Proof of Proposition~\ref{prop:asymptotic}
(Asymptotic Agreement under Uniform Risk-Bound 
Consistency)}

We restate the regularity conditions for completeness.

\begin{assumption}
\label{ass:asymptotic}
\begin{enumerate}[label=(A\arabic*)]
    \item The function $R(\lambda)$ is continuous and 
    strictly decreasing in a neighbourhood of
    \begin{equation}
        \lambda^\star
        =
        \inf\left\{
            \lambda :
            R(\lambda) \le \alpha
        \right\}.
    \end{equation}
    Moreover, $\lambda^\star$ is an interior point of 
    the search domain $\Lambda$.

    \item The split-conformal DCO-Warmstart threshold satisfies
    \begin{equation}
        \hat{\lambda}_{\mathrm{DCO}}
        \xrightarrow{p}
        \lambda^\star.
    \end{equation}

    \item The coupled CRC/BQ threshold can be written as
    \begin{equation}
        \hat{\lambda}_{\mathrm{CRC}}
        =
        \inf\!\left\{
            \lambda :
            \widehat{R}_m(\lambda)
            +
            b_m(\lambda,\delta_m)
            \le \alpha
        \right\},
    \end{equation}
    where $b_m(\lambda,\delta_m)\ge 0$ and the empirical 
    risk and excess margin satisfy, respectively,
    \begin{equation}
        \sup_{\lambda\in\Lambda}
        \left|
            \widehat{R}_m(\lambda) - R(\lambda)
        \right|
        \xrightarrow{p} 0,
    \end{equation}
    and
    \begin{equation}
        \sup_{\lambda\in\Lambda}
        b_m(\lambda,\delta_m)
        \xrightarrow{p} 0.
    \end{equation}
\end{enumerate}
\end{assumption}

\begin{proof}[Proof of Proposition~\ref{prop:asymptotic}]
By assumption~(A2),
\begin{equation}
    \hat{\lambda}_{\mathrm{DCO}}
    \xrightarrow{p}
    \lambda^\star.
\end{equation}
It remains to show that
\begin{equation}
    \hat{\lambda}_{\mathrm{CRC}}
    \xrightarrow{p}
    \lambda^\star.
\end{equation}

Fix any $\varepsilon>0$ small enough that 
$\lambda^\star-\varepsilon$ and 
$\lambda^\star+\varepsilon$ lie in the neighbourhood where 
$R$ is continuous and strictly decreasing. Since 
$\lambda^\star$ is the boundary of the population feasible 
set and $R$ is strictly decreasing near $\lambda^\star$, 
we have
\begin{equation}
    R(\lambda^\star-\varepsilon)
    >
    \alpha
    >
    R(\lambda^\star+\varepsilon).
\end{equation}
Define the positive margin
\begin{equation}
    \Delta_\varepsilon
    =
    \frac{1}{2}
    \min\left\{
        R(\lambda^\star-\varepsilon)-\alpha,\,
        \alpha-R(\lambda^\star+\varepsilon)
    \right\}
    >
    0.
\end{equation}

By assumption~(A3), with probability tending to one,
\begin{equation}
    \sup_{\lambda\in\Lambda}
    \left|
        \widehat{R}_m(\lambda)-R(\lambda)
    \right|
    \le
    \Delta_\varepsilon
\end{equation}
and
\begin{equation}
    \sup_{\lambda\in\Lambda}
    b_m(\lambda,\delta_m)
    \le
    \Delta_\varepsilon.
\end{equation}
On this event, since $b_m(\lambda,\delta_m)\ge 0$,
\begin{equation}
    \widehat{R}_m(\lambda^\star-\varepsilon)
    +
    b_m(\lambda^\star-\varepsilon,\delta_m)
    \ge
    R(\lambda^\star-\varepsilon)
    -
    \Delta_\varepsilon
    >
    \alpha.
\end{equation}
Thus $\lambda^\star-\varepsilon$ is not feasible. 
Similarly,
\begin{equation}
    \widehat{R}_m(\lambda^\star+\varepsilon)
    +
    b_m(\lambda^\star+\varepsilon,\delta_m)
    \le
    R(\lambda^\star+\varepsilon)
    +
    2\Delta_\varepsilon
    <
    \alpha.
\end{equation}
Thus $\lambda^\star+\varepsilon$ is feasible. Therefore, 
on an event whose probability tends to one,
\begin{equation}
    \lambda^\star-\varepsilon
    <
    \hat{\lambda}_{\mathrm{CRC}}
    \le
    \lambda^\star+\varepsilon.
\end{equation}
Equivalently,
\begin{equation}
    \left|
        \hat{\lambda}_{\mathrm{CRC}}
        -
        \lambda^\star
    \right|
    \le
    \varepsilon
\end{equation}
with probability tending to one. Hence,
\begin{equation}
    \hat{\lambda}_{\mathrm{CRC}}
    \xrightarrow{p}
    \lambda^\star.
\end{equation}
Combining this with assumption~(A2) yields
\begin{equation}
    \hat{\lambda}_{\mathrm{DCO}}
    -
    \hat{\lambda}_{\mathrm{CRC}}
    \xrightarrow{p}
    0.
\end{equation}
\end{proof}

\begin{remark}[Finite-sample distinction between DCO-Warmstart and CRC/BQ]
\label{rem:delta-app}
Proposition~\ref{prop:asymptotic} should not be 
interpreted as asserting that DCO-Warmstart and CRC/BQ provide 
the same finite-sample guarantee. They do not. DCO-Warmstart 
targets marginal conformal coverage, whereas CRC/BQ 
targets high-probability risk control. The proposition 
states only that, when the coupled risk bound 
consistently estimates the population risk boundary 
and its excess margin $b_m(\lambda,\delta_m)$ vanishes 
uniformly, the selected thresholds approach a common 
population limit $\lambda^\star$.
\end{remark}

\begin{remark}[Verifying Assumption~2 via Lemma~B.3]
\label{rem:a2_justification}
Assumption~2 is not an additional hypothesis imposed on the method; it
is a consequence of the standard split-conformal quantile construction
under mild regularity.
Specifically, Lemma~B.3 shows that, if the score
distribution $F_{\hat{\phi}_{\mathrm{tune}}}$ is continuous and has density bounded
below by $c>0$ in a neighbourhood of the population quantile
$q^\star_{\hat{\phi}_{\mathrm{tune}}}$, then the empirical conformal threshold
$\hat{q}_{1-\alpha}$ satisfies
\[
    \mathbb{P}\!\left(
        \bigl|\hat{q}_{1-\alpha}-q^\star_{\hat{\phi}_{\mathrm{tune}}}\bigr|>t
    \right)
    \leq
    2\exp\!\left(
        -2m_{\mathrm{cal}}
        \!\left(ct-\frac{2}{m_{\mathrm{cal}}}\right)_{\!+}^{2}
    \right),
\]
which implies $\hat{q}_{1-\alpha}\xrightarrow{p}q^\star_{\hat{\phi}_{\mathrm{tune}}}$
as $m_{\mathrm{cal}}\to\infty$.
Identifying $\widehat{\lambda}_{\mathrm{DCO}}=\hat{q}_{1-\alpha}$ and
$\lambda^\star=q^\star_{\hat{\phi}_{\mathrm{tune}}}$, Assumption~2 therefore holds
whenever the score distribution satisfies the local density condition
of Lemma~B.3.
The correction term $\bigl(ct-2/m_{\mathrm{cal}}\bigr)_+$ accounts for the
discreteness of the order statistic and is negligible once
$m_{\mathrm{cal}}\gg 1/(ct)$.
\end{remark}

\section{Experimental Details and Additional Results}
\label{app:exp}
This appendix provides full implementation details and 
additional results for the experiments in 
Section~\ref{sec:experiment}. 
Appendix~\ref{app:exp-regression} covers the regression 
experiment and Appendix~\ref{app:exp-classification} 
covers the ImageNet-A classification experiment.
\subsection{Shared Experimental Components}
\label{app:shared}

\paragraph{Matched-budget protocol.}
For all datasets, BQ/CRC uses the union
$D_{\mathrm{tune}}\cup D_{\mathrm{cal}}$ as its
calibration pool, so both DCO-Warmstart and the coupled
baseline consume the same total number of
non-training examples.
DCO-Warmstart allocates these examples across two independent
splits; BQ/CRC treats them as a single pool and
applies its own risk-control procedure.

\paragraph{BQ/CRC implementation.}
The BQ threshold is selected as
\begin{equation}
    \hat{\lambda}_{\mathrm{BQ}}
    = \inf\bigl\{\lambda :
    \mathbb{P}(L^+(\lambda)\le\alpha
    \mid \ell_{1:m}(\lambda))
    \ge 1-\delta\bigr\},
\end{equation}
where $L^+$ is the Dirichlet-MC upper bound on
conformal risk.
Unless stated otherwise, all BQ/CRC runs use
$\delta=0.05$ (confidence $1-\delta=0.95$),
$M=1{,}000$ Dirichlet draws, and maximum loss
bound $B=1.0$.

\subsection{Regression Experiments}
\label{app:exp-regression}

\paragraph{Shared model specification.}
All regression experiments use the same sparse
Bayesian linear regression family:
\begin{equation}
    Y \mid X,\theta,\theta_0,\tau
    \sim \mathcal{N}(X^\top\theta+\theta_0,\,\tau),
\end{equation}
with hierarchical priors
\begin{equation}
    \theta_j \sim \mathrm{Laplace}(0,b),\quad
    b \sim \mathrm{Gamma}(1,1),\quad
    \tau \sim \mathrm{HalfNormal}(c),\quad
    \theta_0 \sim \mathcal{N}(0,10).
\end{equation}
The non-conformity score is the posterior predictive
negative log-likelihood,
\begin{equation}
    S(x,y)=-\log\hat{p}(y\mid x),\quad
    \log\hat{p}(y\mid x)
    =\log\!\left(\frac{1}{T}
    \sum_{t=1}^{T}p(y\mid x,\theta^{(t)})\right),
\end{equation}
where $\theta^{(t)}$ are NUTS MCMC posterior draws.
Prediction intervals are formed on a response grid
spanning $[\min(y_{\mathrm{train}})-2,\,
\max(y_{\mathrm{train}})+2]$.
Inputs $X$ and targets $y$ are standardised with
\texttt{StandardScaler}.
DCO-Warmstart searches over prior scale $c\in\{1.0,0.02\}$;
BQ/CRC uses a fixed structure with $c=1.0$.

\paragraph{Non-Bayesian baselines (Diabetes only).}
\textit{Split~CP.}
A ridge regressor $\hat{f}$ is trained on
$D_{\mathrm{train}}$. Calibration residuals are
$s_i=|y_i-\hat{f}(x_i)|$ for
$(x_i,y_i)\in D_{\mathrm{cal}}$, and the
prediction interval is
$C(x)=[\hat{f}(x)-q,\;\hat{f}(x)+q]$
where $q=\mathrm{Quantile}_{1-\alpha}\{s_i\}$.

\textit{CQR.}
Lower and upper quantile regressors
$\hat{q}_{\alpha/2}$, $\hat{q}_{1-\alpha/2}$
(gradient boosting, 200 estimators, max depth~3)
are trained on $D_{\mathrm{train}}$.
Calibration scores are
\begin{equation}
    s_i = \max\bigl(
        \hat{q}_{\alpha/2}(x_i)-y_i,\;
        y_i-\hat{q}_{1-\alpha/2}(x_i)
    \bigr),
\end{equation}
and the interval is
$[\hat{q}_{\alpha/2}(x)-q,\;
\hat{q}_{1-\alpha/2}(x)+q]$
where $q=\mathrm{Quantile}_{1-\alpha}\{s_i\}$.

\subsubsection{Diabetes}
\label{app:diabetes}

\paragraph{Data splitting.}
Each run partitions $n=442$ observations into
approximate sizes $|D_{\mathrm{train}}|\approx150$,
$|D_{\mathrm{tune}}|\approx112$,
$|D_{\mathrm{cal}}|\approx113$,
$|D_{\mathrm{test}}|\approx67$,
repeated over 50 random seeds.

\paragraph{MCMC settings.}
$T=8{,}000$ posterior samples after warm-up;
response grid size $B=400$.

\paragraph{Hyperparameter selection.}
Table~\ref{tab:regression_hyp} reports calibrated
test performance for both prior scales.
Both values yield nearly identical coverage and
width after conformal recalibration, indicating
that the calibration step absorbs the effect of
prior misspecification on this dataset.

\begin{table}[h]
\centering
\setlength{\tabcolsep}{8pt}
\caption{Prior scale selection on the Diabetes
dataset (mean~$\pm$~std over 50 splits,
target $1-\alpha=0.8$). Conformal recalibration
on $D_{\mathrm{cal}}$ absorbs the effect of prior
misspecification.}
\label{tab:regression_hyp}
\begin{tabular}{lcc}
\toprule
Prior scale $c$ & Coverage & Width \\
\midrule
$c=1.0$  & $0.841\pm0.063$ & $2.051\pm0.238$ \\
$c=0.02$ & $0.843\pm0.061$ & $2.047\pm0.231$ \\
\bottomrule
\end{tabular}
\end{table}

\paragraph{Threshold-selection outcomes.}
Table~\ref{tab:regression_selection} summarises
the $\lambda$-optimisation outcomes across 50
splits. A feasible $\lambda$ satisfying empirical
coverage $\geq 1-\alpha$ was found in every split;
the fallback was never triggered.

\begin{table}[h]
\centering
\setlength{\tabcolsep}{8pt}
\caption{DCO-Warmstart threshold-selection outcomes on
$D_{\mathrm{tune}}$ across 50 splits.}
\label{tab:regression_selection}
\begin{tabular}{lc}
\toprule
Outcome & Count (out of 50) \\
\midrule
Feasible $\lambda$ found        & 50 \\
Fallback (no feasible $\lambda$) & 0  \\
\bottomrule
\end{tabular}
\end{table}

\paragraph{DirectTune diagnostic.}
The tuning-based threshold produces intervals of
identical average width to the calibration quantile
($1.914$ vs.\ $1.914$) but with higher per-split
coverage variance ($0.072$ vs.\ $0.066$ std),
consistent with the optimistic bias of
Remark~\ref{rem:version2-app}.

\subsubsection{California Housing}
\label{app:cal_housing}

\paragraph{Data source and splitting.}
Loaded via \texttt{sklearn.datasets.fetch\_california\_housing};
rows with non-finite values are removed and the
pool is subsampled to $3{,}000$ observations.
After removing $15\%$ as $D_{\mathrm{test}}$
(${\approx}450$), the remaining budget is split
$30\%/30\%/\mathrm{rest}$ into
$D_{\mathrm{tune}}$, $D_{\mathrm{cal}}$
(${\approx}765$ each), and $D_{\mathrm{train}}$
(${\approx}1{,}020$).
BQ/CRC calibrates on the combined pool of
${\approx}1{,}530$ points.

\paragraph{MCMC settings.}
600 warm-up steps, 3{,}000 posterior samples,
1 chain; response grid of 300 points.
Both DCO-Warmstart and BQ/CRC use identical MCMC settings.

\subsubsection{Concrete Compressive Strength}
\label{app:concrete}

\paragraph{Data source.}
Loaded from OpenML
(\texttt{Concrete\_Compressive\_Strength},
version~3); if unavailable, the UCI Excel file
is used as a fallback.
Rows with missing or non-numeric values are
dropped; no subsampling is applied.

\paragraph{Data splitting.}
Same proportional protocol as California Housing:
$15\%$ test, $30\%$ tuning, $30\%$ calibration,
remainder for training.

\paragraph{MCMC settings.}
Both DCO-Warmstart and BQ/CRC use identical posterior
inference settings: 600 warm-up steps, 3{,}000
posterior samples, 1 chain, and a response grid
of 300 points. This matched configuration is used
for all three target coverage levels reported in
Table~\ref{tab:regression_alpha_levels}.

\subsection{Classification Experiments}
\label{app:exp-classification}

\paragraph{Shared backbone and scoring.}
Both ImageNet-A and CIFAR-100 experiments use a
pretrained ResNet-50 as a frozen feature extractor,
with an MC-dropout classification head trained on
$D_{\mathrm{train}}$.
Predictive uncertainty is approximated via $T=20$
stochastic forward passes (model kept in
\texttt{train} mode).
Two nonconformity scores are evaluated for each
candidate:
\begin{align}
    \bar{p}(c\mid x)
        &= \frac{1}{T}\sum_{t=1}^{T}p_t(c\mid x),
    &S_{\mathrm{post}}(x,c)
        &= -\log\bar{p}(c\mid x), \\
    \tilde{p}(c\mid x)
        &= \frac{\sum_t p_t(c\mid x)^2}
               {\sum_t p_t(c\mid x)},
    &S_{\mathrm{aoi}}(x,c)
        &= -\log\tilde{p}(c\mid x).
\end{align}
Both scores induce nested prediction sets
$C_\lambda(x)=\{y:S(x,y)\le\lambda\}$.

\paragraph{Shared DCO-Warmstart candidate class.}
DCO-Warmstart searches over $|\Phi|=16$ configurations:
\begin{itemize}
    \item score type:
          \texttt{posterior\_nll}, \texttt{aoi\_nll};
    \item dropout rate:
          $\{0.05,0.10,0.20,0.30\}$;
    \item hidden widths:
          $\{(512,256),(256,128)\}$.
\end{itemize}
For each candidate the tuning-stage threshold is
searched over 80 quantile-based values derived
from $D_{\mathrm{tune}}$; ties are broken by P95
set size then by $\lambda$ value.
The deployed threshold is recalibrated on
$D_{\mathrm{cal}}$ using the exact split-conformal
quantile.

\paragraph{Shared BQ/CRC baseline.}
The baseline uses a fixed structure
(\texttt{posterior\_nll}, dropout $0.05$, hidden
$(512,256)$) and calibrates on the combined pool
$D_{\mathrm{tune}}\cup D_{\mathrm{cal}}$, with
$\delta=0.05$, $B=1.0$, and $M=1{,}000$ Dirichlet
draws.

\subsubsection{ImageNet-A}
\label{app:imageneta}

\paragraph{Data splitting.}
Data are partitioned into approximately 2{,}000
samples each for $D_{\mathrm{train}}$,
$D_{\mathrm{cal}}$, and $D_{\mathrm{test}}$, and
1{,}000 for $D_{\mathrm{tune}}$, across 50
stratified random seeds (198 classes).
The BQ/CRC calibration pool therefore contains
$3{,}000$ points.

\paragraph{Candidate configurations and selection.}
Table~\ref{tab:imageneta_candidates} lists all 16
configurations evaluated on $D_{\mathrm{tune}}$
for a representative seed.
All configurations achieve the target coverage
($0.8$) on the tuning split; DCO-Warmstart selects
\texttt{cand\_001} (posterior NLL, dropout $0.05$,
hidden $(512,256)$) as it achieves the smallest
average set size ($22.105$) among feasible
candidates.
Table~\ref{tab:imagenet_selection_freq} summarises
selection frequencies across all 50 seeds.

\paragraph{Single-run illustration.}
For the representative seed in
Table~\ref{tab:imageneta_candidates}, the tuning
threshold $\lambda_{\mathrm{tune}}=6.180$ falls
below the nominal target when applied directly;
recalibration on $D_{\mathrm{cal}}$ raises it to
$q_{\mathrm{cal}}=6.419$, restoring coverage to
$0.797$.
BQ selects the more conservative threshold
$\lambda_{\mathrm{BQ}}=6.510$, achieving coverage
$0.808$ at the cost of larger sets
($25.56$ vs.\ $24.56$).

\begin{table}[h]
\centering
\small
\caption{Top 10 candidate configurations evaluated on
$D_{\mathrm{tune}}$ for ImageNet-A (representative
seed). All configurations are feasible; selected
configuration in bold.}
\label{tab:imageneta_candidates}
\resizebox{0.8\linewidth}{!}{%
\begin{tabular}{llccrccc}
\toprule
ID & Score & Dropout & Hidden
   & $\lambda$ & Status & Avg Size & P95 \\
\midrule
\textbf{cand\_001} & posterior\_nll
    & 0.05 & (512,256) & 6.180
    & feasible & \textbf{22.105} & 49.00 \\
cand\_003 & posterior\_nll
    & 0.10 & (512,256) & 6.095
    & feasible & 22.216 & 49.00 \\
cand\_009 & aoi\_nll
    & 0.05 & (512,256) & 5.725
    & feasible & 22.238 & 47.00 \\
cand\_011 & aoi\_nll
    & 0.10 & (512,256) & 5.337
    & feasible & 22.363 & 46.00 \\
cand\_005 & posterior\_nll
    & 0.20 & (512,256) & 6.016
    & feasible & 23.587 & 49.00 \\
cand\_007 & posterior\_nll
    & 0.30 & (512,256) & 5.872
    & feasible & 24.519 & 48.00 \\
cand\_010 & aoi\_nll
    & 0.05 & (256,128) & 5.568
    & feasible & 24.880 & 49.05 \\
cand\_008 & posterior\_nll
    & 0.30 & (256,128) & 5.443
    & feasible & 25.138 & 46.00 \\
cand\_004 & posterior\_nll
    & 0.10 & (256,128) & 6.083
    & feasible & 25.141 & 51.00 \\
cand\_002 & posterior\_nll
    & 0.05 & (256,128) & 6.336
    & feasible & 25.583 & 55.00 \\
\bottomrule
\end{tabular}}
\end{table}

\begin{table}[!htbp]
\centering
\caption{DCO-Warmstart candidate selection frequencies on
$D_{\mathrm{tune}}$ across 50 seeds on ImageNet-A.
All 16 configurations achieved the coverage
constraint in every seed (infeasible fallback:
0/50).}
\label{tab:imagenet_selection_freq}
\setlength{\tabcolsep}{6pt}
\begin{tabular}{llccc}
\toprule
Score type & Hidden dims & Dropout
           & Seeds selected & \% \\
\midrule
\texttt{aoi\_nll}       & $(512,256)$ & $0.05$ & 20 & 40\% \\
\texttt{posterior\_nll} & $(512,256)$ & $0.05$ & 12 & 24\% \\
\texttt{posterior\_nll} & $(512,256)$ & $0.10$ & 7  & 14\% \\
\texttt{aoi\_nll}       & $(512,256)$ & $0.10$ & 4  & 8\%  \\
\texttt{posterior\_nll} & $(512,256)$ & $0.20$ & 4  & 8\%  \\
\texttt{posterior\_nll} & $(256,128)$ & $0.05$ & 1  & 2\%  \\
\texttt{posterior\_nll} & $(256,128)$ & $0.30$ & 1  & 2\%  \\
\texttt{aoi\_nll}       & $(256,128)$ & $0.05$ & 1  & 2\%  \\
\midrule
\multicolumn{3}{l}{\textit{Marginal: score type}} & & \\
\quad\texttt{aoi\_nll}       && & 25 & 50\% \\
\quad\texttt{posterior\_nll} && & 25 & 50\% \\
\multicolumn{3}{l}{\textit{Marginal: hidden dims}} & & \\
\quad$(512,256)$ && & 47 & 94\% \\
\quad$(256,128)$ && & 3  & 6\%  \\
\multicolumn{3}{l}{\textit{Marginal: dropout rate}} & & \\
\quad$0.05$ && & 34 & 68\% \\
\quad$0.10$ && & 11 & 22\% \\
\quad$0.20$ && & 4  & 8\%  \\
\quad$0.30$ && & 1  & 2\%  \\
\bottomrule
\end{tabular}
\end{table}

\begin{figure}[t]
    \centering
    \includegraphics[width=0.72\linewidth]{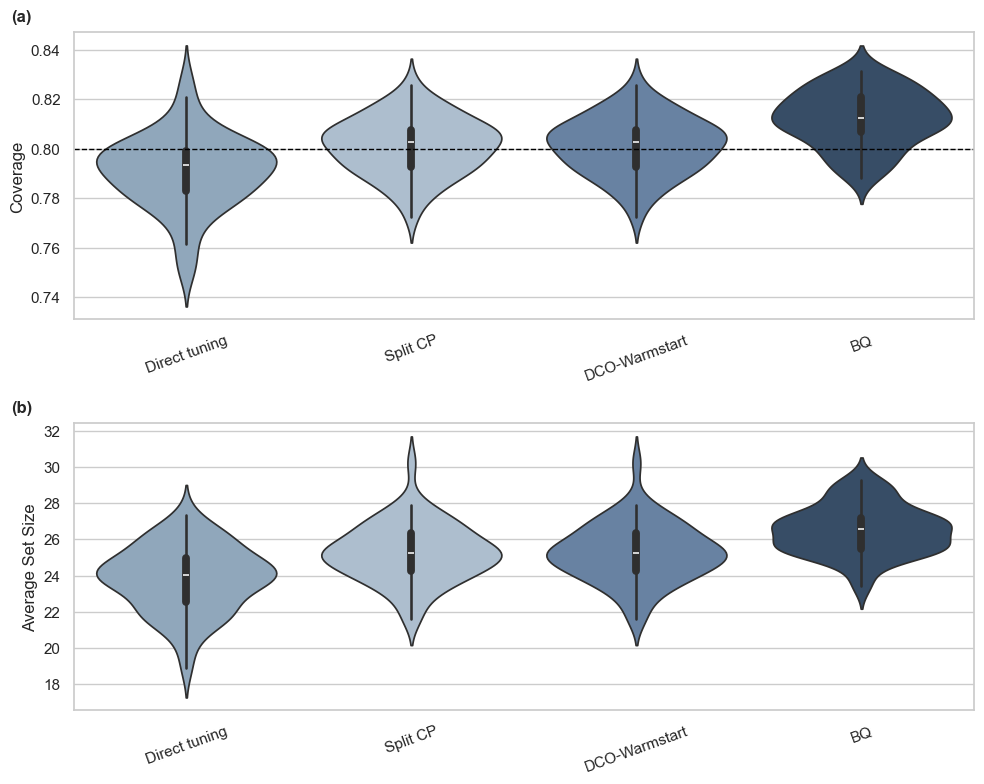}
    \caption{Distribution of test coverage~(a) and
    prediction set size~(b) on ImageNet-A across 50
    random splits. The dotted line marks the target
    $1-\alpha=0.8$. DCO-Warmstart concentrates around the
    target with a tighter set-size distribution than
    BQ.}
    \label{fig:imagenet_hist}
\end{figure}

\begin{figure}[t]
    \centering
    \includegraphics[width=0.52\linewidth]{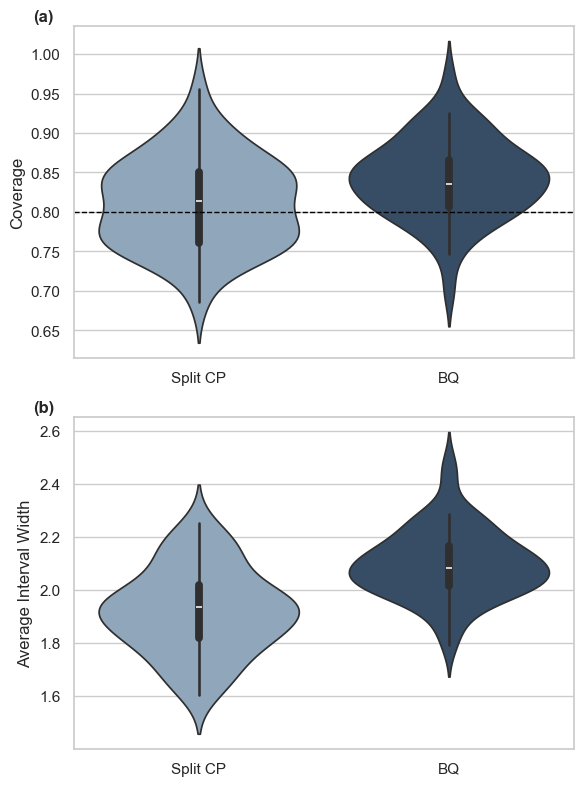}
    \caption{Coverage~(a) and average interval
    width~(b) for Split~CP and BQ on the Diabetes
    dataset. BQ achieves higher coverage but
    produces wider intervals, reflecting the
    conservative bias of coupled threshold selection
    and calibration.}
    \label{fig:aoi_splitcp}
\end{figure}

\begin{figure}[t]
    \centering
    \includegraphics[width=0.52\linewidth]{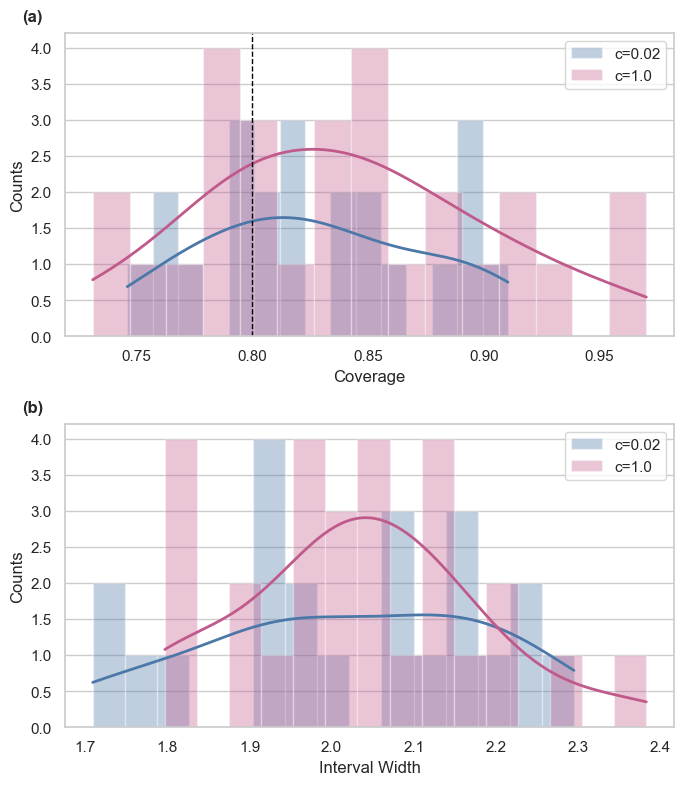}
    \caption{Coverage~(a) and average interval
    width~(b) under prior scales $c=1.0$ and
    $c=0.02$ on the Diabetes dataset across 50
    splits ($1-\alpha=0.8$). Conformal recalibration
    on $D_{\mathrm{cal}}$ absorbs the effect of
    prior misspecification.}
    \label{fig:prior_scale}
\end{figure}

\subsubsection{CIFAR-100}
\label{app:cifar100}

\paragraph{Data source.}
Raw images are loaded from
\texttt{torchvision.datasets.CIFAR100} (training
and test partitions concatenated into a single
pool); a pretrained ResNet-50
(\texttt{ResNet50\_Weights.DEFAULT}) is used as a
frozen feature extractor with the final
classification layer removed.

\paragraph{Classification head training.}
Adam optimiser, learning rate $10^{-3}$, weight
decay $10^{-4}$, batch size $128$, $15$ epochs.

\paragraph{Data splitting.}
For each of 50 random seeds:
$|D_{\mathrm{train}}|=2{,}000$,
$|D_{\mathrm{tune}}|=1{,}000$,
$|D_{\mathrm{cal}}|=2{,}000$,
$|D_{\mathrm{test}}|=2{,}000$.
The BQ/CRC calibration pool therefore contains
$3{,}000$ points, matching the ImageNet-A protocol.

\end{document}